\documentclass[acmsmall,screen,nonacm]{acmart}

\setcopyright{none}
\makeatletter
\@ifundefined{hyxmp@parse@acmart}{}{\let\hyxmp@parse@acmart\relax}
\makeatother

\usepackage{amsmath}
\usepackage{mathtools}
\usepackage{booktabs}
\usepackage{tabularx}
\usepackage{array}
\usepackage{multirow}
\usepackage{enumitem}
\usepackage{graphicx}
\usepackage{xcolor}
\usepackage{tikz}
\usepackage{url}
\usepackage{microtype}
\usepackage{placeins}
\usetikzlibrary{arrows.meta,positioning,fit,calc,shapes.geometric,backgrounds}

\hypersetup{
  hidelinks,
  keeppdfinfo=true,
  pdftitle={Cross-Interface Authorization for AI Computer-Use Agents},
  pdfauthor={Genliang Zhu and Chu Wang},
  pdfsubject={Versioned effect-surface closure for hybrid computer-use agents},
  pdfcreator={LaTeX}
}

\setlist[itemize]{leftmargin=1.25em}
\setlist[enumerate]{leftmargin=1.45em}
\newcolumntype{L}[1]{>{\raggedright\arraybackslash}p{#1}}
\newcolumntype{Y}{>{\raggedright\arraybackslash}X}

\definecolor{scBlue}{HTML}{1D4ED8}
\definecolor{scBlueFill}{HTML}{EFF6FF}
\definecolor{scGreen}{HTML}{15803D}
\definecolor{scGreenFill}{HTML}{F0FDF4}
\definecolor{scAmber}{HTML}{B45309}
\definecolor{scAmberFill}{HTML}{FFFBEB}
\definecolor{scPurple}{HTML}{6D28D9}
\definecolor{scPurpleFill}{HTML}{F5F3FF}
\definecolor{scRed}{HTML}{B91C1C}
\definecolor{scRedFill}{HTML}{FEF2F2}
\definecolor{scSlate}{HTML}{475569}

\newtheorem{definition}{Definition}
\newtheorem{lemma}{Lemma}
\newtheorem{theorem}{Theorem}

\newcommand{\Allow}{\textsc{Allow}}
\newcommand{\Deny}{\textsc{Deny}}
\newcommand{\Indeterminate}{\textsc{Indeterminate}}
\newcommand{\Commit}{\textsc{Commit}}
\newcommand{\Effects}{\ensuremath{\mathcal E}}
\newcommand{\Routes}{\ensuremath{\mathcal R}}
\newcommand{\States}{\ensuremath{\mathcal S}}

\newcommand{\mcode}[1]{\ifmmode\text{\normalfont\ttfamily #1}\else{\normalfont\ttfamily #1}\fi}

\title[Cross-Interface Authorization for AI Computer-Use Agents]
{Cross-Interface Authorization for AI Computer-Use Agents}

\author{Genliang Zhu}
\affiliation{%
  \institution{Accentrust}
  \city{Vancouver}
  \country{Canada}
}
\affiliation{%
  \institution{Georgia Institute of Technology}
  \city{Atlanta}
  \state{Georgia}
  \country{USA}
}
\email{research@accentrust.com}

\author{Chu Wang}
\affiliation{%
  \institution{Accentrust}
  \city{Vancouver}
  \country{Canada}
}
\affiliation{%
  \institution{University of Illinois Urbana-Champaign}
  \city{Urbana}
  \state{Illinois}
  \country{USA}
}
\email{wally@accentrust.com}

\begin{abstract}
Computer-use agents act through graphical interfaces, command lines, APIs,
scripts, and Model Context Protocol servers. A policy decision can be
interface-invariant while its executable authorization is not: another route
can bypass mediation, alter a security-relevant default, or commit after the
authorized state has changed. We present a profile-bounded protocol for a
fixed, versioned computer-use deployment. It enumerates effect-bearing routes
and durable sinks, checks a route-to-sink enforcement cut, normalizes each
concrete request to a canonical effect, issues a route-bound prepare ticket,
and verifies the route, effect, grant, trace, profile, inventory, and
authoritative state again at commit. A committed result requires an agreeing
sink receipt. Under explicit assumptions, we prove profile-bounded refinement,
matched-effect decision invariance, resistance to alternate-route laundering
and declared cross-surface recomposition, commit-time state safety, and
receipt-grounded outcomes.

The reference profile registers 12 effect families, five interfaces, and 60
routes. Two structurally distinct runtime mappings produce 3,464 traces over
1,732 scenarios each: 960 commits, 2,312 denials, and 192 commits after
re-preparation. An independent checker accepts every retained trace and pair
and rejects 26/26 semantic regressions; coverage and normalizer suites reject
24/24 and 720/720 mutants; and 168/168 commit probes match their declared
outcomes. A frozen-source audit passes 52 checks over 12 files from two
independent hybrid runtimes and one public authorization substrate.
Source-shaped adapter refinements commit 48/48 instrumented calls and reject
15/15 unbound cases. The resulting guarantees bind the declared profile and
its effect semantics.
\end{abstract}

\ccsdesc[500]{Security and privacy~Access control}
\ccsdesc[300]{Security and privacy~Access control models}
\keywords{computer-use agents, access control, complete mediation, GUI, CLI,
API, scripts, Model Context Protocol, route-bound commits, state verification,
runtime enforcement}

\begin{document}
\maketitle
% acmart rewrites metadata while building the title. Restore the
% publication-neutral fields after \maketitle so both build paths retain them.
\hypersetup{
  pdftitle={Cross-Interface Authorization for AI Computer-Use Agents},
  pdfauthor={Genliang Zhu; Chu Wang},
  pdfsubject={Versioned effect-surface closure for hybrid computer-use agents}
}

\section{Introduction}
\label{sec:introduction}

Computer-use agents no longer operate through one uniform tool interface. A
single task may inspect a screen, activate a graphical control, invoke a shell
command, call an application API, run a generated script, and use a Model
Context Protocol (MCP) server. Reproducible agent environments make this
execution style explicit: WebArena exposes stateful web applications
\cite{zhou2024webarena}, OSWorld exercises real computer applications
\cite{xie2024osworld}, WeaveBench requires long-horizon GUI and CLI
orchestration \cite{li2026weavebench}, and CUA-Universe constructs shared-state
GUI+CLI environments \cite{shi2026cuauniverse}. AgentDojo further demonstrates
why a tool execution environment must account for adversarially influenced
inputs \cite{debenedetti2024agentdojo}. Matched-interface studies show that the
available operation set and its coverage materially shape agent performance
\cite{zhou2026guicli}. These systems expose one resource through several
representations and call paths.

The resulting security problem is not solved by placing a sound policy check
beside one entry point. Consider an agent authorized to grant read access to a
particular repository. The graphical dialog is guarded, but an equivalent CLI
subcommand reaches the same persistence layer through an unregistered handler.
The agent can therefore preserve the apparent intent while bypassing the
decision point. Even if both paths call the same policy, a GUI approval token
may be reusable after switching to an API route whose defaults differ. Between
approval and commit, the named repository can be rebound, a policy epoch can
advance, or the target version can change. Finally, an authorized confidential
read followed through another interface by an otherwise authorized external
send can disclose data through a composition absent from either local request.

These are boundary failures. They arise because the deployer has not made a
checkable statement about (1) which effect-bearing routes exist, (2) where all
registered paths meet enforcement, (3) whether their representations refine
the same canonical effect, and (4) which authoritative state and trace prefix
remain valid when the sink commits. Classical complete mediation requires
every access to be checked \cite{saltzer1975protection}; runtime-monitor theory
characterizes policies enforceable over event sequences
\cite{schneider2000enforceable}; usage control accounts for mutable attributes
during use \cite{park2004ucon}. Hybrid agent runtimes require these principles
to be discharged against a versioned, concrete interface inventory.

Recent agent-security systems provide substantial adjacent mechanisms.
Secure-CUA protects GUI execution from untrusted interface influence with
per-action transactions \cite{choudhary2026securecua}; CSAgent protects GUI,
API, and CLI activity using system-level contextual policy
\cite{gong2025csagent}; CaMeLs extends architectural isolation to computer use
\cite{foerster2026camels}; and AIRGuard normalizes heterogeneous tool calls and
enforces action-time authority \cite{qin2026airguard}. Temporary Authority,
Permanent Effects shows why authorization evidence must remain fresh, causally
valid, effect-bound, and eligible when a durable action commits
\cite{santosgrueiro2026temporary}. AID-Guard binds authorization through
provider settlement and recovery \cite{tong2026aidguard}, while EBL-Core
defines an execution-boundary profile for narrowly scoped execution authority
\cite{wu2026eblcore}. Most directly, CONTINUITY defines consequence integrity,
effect-class complete mediation, current authorization witnesses, finality
sinks, and compositional security-context contracts
\cite{zheng2026continuity}. Our question is narrower and operational: before a
commit-time mechanism can protect a declared hybrid CUA stack, what finite
evidence establishes the stack's protected route surface, and how does each
concrete route refine a stateful canonical effect protocol without making a
prepare ticket portable across routes?

\subsection{Versioned effect-surface closure}

We define a \emph{surface profile} that binds an identifier and epoch to five
objects: an inventory of effect-bearing entries; a directed route-to-sink
graph; an interceptor cut through every registered entry-to-durable-sink path;
typed per-interface normalizers; and commit adapters for authoritative sinks.
A verifiable \emph{surface witness} commits to the inventory, graph, cut,
normalizers, and sink set. Its statement is deliberately finite: it concerns
one enumerated deployment profile, not every operation a host might possibly
perform.

We present a route-bound prepare/commit protocol over that profile. A
surface adapter normalizes a concrete action into an effect tuple containing
the principal, operation, resource, recipient, fields, quantity, purpose, data
class, state key, and sink. Prepare evaluates an interface-erased policy and a
trace-closure policy, reads authoritative pre-state, and issues a ticket bound
to the actor, task, request, exact route and sink, normalized effect, grant,
profile and inventory digests, coverage witness, state snapshot, and committed
trace prefix. Commit resolves and normalizes the action again, verifies every
binding against current authoritative state, redeems the ticket once, and
accepts committed success only when the sink returns a matching receipt.

The route binding is essential. Two matched routes should receive the same
policy decision at the same state, but they must not share an execution ticket.
When an agent prepares through the GUI and then presents that ticket on the
CLI, the old ticket is atomically fenced without a sink effect and the CLI
action is prepared anew. This separation preserves interface-invariant policy
while preventing an authorization witness from silently inheriting the
parsing, defaults, or sink binding of another path.

\subsection{Contributions}

This paper makes four contributions:

\begin{enumerate}
  \item It gives a profile-bounded definition of effect-surface closure. The
  definition connects a versioned entry inventory, path-cut witness, typed
  normalizers, authoritative durable sinks, and an explicit effect/trace
  semantics. It identifies separating executions for route omission,
  representation mismatch, route-switch ticket reuse, state races,
  cross-surface recomposition, and receipt ambiguity.

  \item It specifies the route-bound prepare/commit protocol.
  Policy is evaluated over canonical effects, while execution authority remains
  tied to one concrete route, state witness, trace prefix, profile, inventory,
  grant validity interval, actor, request, and logical-operation idempotency
  key. Sink receipts ground claims about realized effects.

  \item It proves conditional profile-bounded concrete refinement,
  interface-invariant authorization for matched effects, prevention of
  alternate-route laundering and unauthorized recomposition, commit-time state
  safety, receipt-grounded outcome correctness, and end-to-end confinement.
  Every result names its coverage, normalization, state, and finality premises.

  \item It supplies executable reference evidence: a profile with 12 effect
  families, five interfaces, and 60 routes; two structurally different runtime
  mappings; an independent trace checker; coverage and normalizer mutations;
  commit-race probes; and source-shaped adapter compatibility checks. The
  retained run produces 3,464 traces over 1,732 scenarios per runtime and
  reports every result against a fixed denominator. A digest-pinned audit of
  WeaveBench and Magentic-UI records concrete compatibility points without
  presenting those projects as integrations or comparative baselines.
\end{enumerate}

\subsection{Claim boundary and organization}

Our claims apply to actions and derived effects represented by a registered
profile whose paths satisfy the surface witness and whose sink adapters satisfy
the stated assumptions. The construction does not infer semantic equivalence
for arbitrary programs, discover covert or omitted host effects, prove the
truth of an external state oracle, or reinterpret a missing receipt as a deny.
These are explicit assurance boundaries rather than exceptions hidden inside
the evaluation.

Section~\ref{sec:background} positions the work against hybrid CUA execution and
authorization research. Section~\ref{sec:system} states the system, threat, and
assurance model. Section~\ref{sec:formal} defines the profile and protocol and
states the formal results. Sections~\ref{sec:design} and
\ref{sec:implementation} describe the construction and reference artifact.
Section~\ref{sec:evaluation} presents the registered, mutation, race, checker,
and public-source evidence. Section~\ref{sec:discussion} records the deployment
and interpretation boundary, and Appendix~\ref{app:proofs} gives proofs.

\section{Background and Related Work}
\label{sec:background}

\subsection{Hybrid computer-use execution}

Computer-use agents translate goals into actions against stateful applications.
WebArena supplies a reproducible, long-horizon web setting
\cite{zhou2024webarena}, while OSWorld established execution-based evaluation
across real desktop and web applications \cite{xie2024osworld}. Subsequent
environments make interface heterogeneity central rather than incidental.
WeaveBench combines visual desktop control, commands, code editing, browsers,
and external tools within long trajectories \cite{li2026weavebench};
CUA-Universe builds reproducible GUI+CLI environments and generated command
surfaces over shared applications \cite{shi2026cuauniverse}. A controlled
GUI-versus-CLI comparison finds different bottlenecks for grounded visual
action and skill-interface coverage \cite{zhou2026guicli}. Thus ``the agent
action'' is often one of several surface-specific encodings of a common state
transition.

Magentic-UI combines browser use, code execution, file interaction, MCP tools,
human approvals, and pre/post-action observation in a human-in-the-loop agent
architecture \cite{mozannar2025magenticui}. The MCP authorization specification
defines transport-level authorization for HTTP requests and operation-specific
scope challenges, but it does not define effect equivalence across GUI, CLI,
API, script, and MCP routes or commit-time binding to current application
state \cite{mcp2026authorization}. Li et al. show that permission-filtered MCP
discovery must remain consistent with per-tool invocation enforcement because
visibility-only filtering can be bypassed by scripted clients
\cite{li2026zerotrustmcp}. Accordingly, discovery is an input to the registered
inventory here, not a substitute for authorization and commit mediation on
each concrete route. These systems motivate a security abstraction that can
distinguish interfaces while relating their realized effects. Treating every
click, command string, or tool-call JSON object as a globally comparable action
either collapses security-relevant defaults or leaves policies coupled to
frontend syntax.

\subsection{Authorization and runtime enforcement}

Complete mediation requires that every protected access pass a reference
validation point \cite{saltzer1975protection}. A finite deployment nevertheless
needs evidence for where its protected entries and sinks lie. Our surface
witness instantiates that requirement as a graph-cut obligation bound to a
versioned inventory. The witness is evidence about declared routes, not a claim
that source inspection can enumerate every possible machine effect.

Runtime enforcement constrains event traces rather than predicting internal
model behavior \cite{schneider2000enforceable}. Usage control further
distinguishes mutable attributes and ongoing decisions
\cite{park2004ucon}. Policy languages such as Cedar provide analyzable
authorization decisions over structured requests \cite{cedar2024}.
The protocol consumes such a decision relation rather than replacing a
policy language. Its additional obligations sit around that relation: concrete
normalization, route-specific preparation, current-state revalidation,
single-use commit, and receipt verification. The authoritative state and
commit assumptions use the familiar linearizability discipline: an accepted
operation has one well-defined point between invocation and response
\cite{herlihy1990linearizability}.

CATP binds enforcement-time policy and adapter-normalized actions to signed
audit evidence \cite{zhou2026catp}. Its consistency probes distinguish valid
signatures from agreement with the actual authorization operands and show that
distinct argument vectors may lose security-relevant distinctions before
hashing. This motivates separate normalizer-refinement and semantic-checking
obligations; a signed decision record alone does not establish a realized sink
transition or complete mediation.

Information-flow control for agents tracks the influence of untrusted or
sensitive information across computation \cite{costa2025ifc}. Our reference
trace policy uses a small, declared form of derived-effect tracking: a
confidential or restricted read returns a labeled handle, and a later send to
an external recipient derives a disclosure effect. This mechanism provides a
deterministic recomposition test for the registered corpus. It neither replaces
a general information-flow system nor claims to track undeclared channels.

\subsection{Security controls for computer-use agents}

Secure-CUA protects CUA execution from untrusted visual content by committing
to an action transaction before observing untrusted regions
\cite{choudhary2026securecua}. Its action-integrity boundary is complementary:
it protects how a GUI action is selected, whereas our profile connects each
registered GUI, CLI, API, script, and MCP route to its sink. CSAgent introduces
system-level static policy enriched with user intent and context and covers
API, CLI, and GUI interfaces \cite{gong2025csagent}. CaMeLs Can Use Computers
Too applies planner isolation and branching plans to GUI workflows
\cite{foerster2026camels}. AIRGuard normalizes heterogeneous tool calls,
derives step authority, tracks trust, simulates sensitive effects, and enforces
at action time \cite{qin2026airguard}. OSGuard supplies action-level and
execution-level safety evaluation with state-based invariants
\cite{mohammadmirzaei2026osguard}. HarnessSecurity-Bench additionally reports
alternative execution paths through permitted tools after a native command
restriction \cite{zhu2026harnesssecurity}. Its separate task-utility and
attack-effect oracles motivate observing the protected effect rather than only
the frontend decision. These contributions motivate strong local decisions and
realistic safety tests. Our object is the refinement obligation that connects
every registered representation of an effect to a route-bound commit protocol
and detects a route omitted from the declared cut.

Temporary Authority, Permanent Effects identifies the temporal, causal,
target-binding, and eligibility conditions under which earlier authority
evidence remains valid at a durable commit
\cite{santosgrueiro2026temporary}. This paper contributes a complementary
unit of analysis: a versioned multi-interface surface profile. AID-Guard
extends authorization through provider state, settlement, retry, and recovery
\cite{tong2026aidguard}. EBL-Core separates an execution release contract from
authority-bearing execution grants and specifies action binding, policy
non-weakening, evidence handling, and redemption-time checks
\cite{wu2026eblcore}. Here, a prepare ticket names one concrete route, cannot
be redeemed through a matched alternate route, and is evaluated only after a
registered route-to-sink coverage obligation and typed concrete-to-effect
refinement have been established. Commit-time state and receipt checks then
complete that surface-refinement path.

APEX enforces task-derived authorization at execution boundaries across Tools,
MCP, and Skills \cite{zheng2026apex}. Its authorization result assumes that
every external effect passes the enforcement wrapper before commitment. The
registered route inventory and graph-cut witness studied here provide a
concrete, version-bound coverage obligation for such a premise, together with
state-indexed sink refinement; boundary authorization alone is not claimed as
a new contribution.

\subsection{Nearest boundary: CONTINUITY}

CONTINUITY is the nearest conceptual work. It models security controls with
assume--guarantee contracts and defines end-to-end consequence integrity using
authenticated context, canonical actions, current authorization witnesses,
effect-class complete mediation, transformation witnesses, and finality
boundaries \cite{zheng2026continuity}. It also evaluates alternate-path and
cross-layer fault classes. Consequently, effect equivalence, complete
mediation, current authorization, and finality binding are not novelty claims
of this paper in isolation.

We operationalize a complementary, scoped profile obligation for concrete CUA
stacks. The profile binds an exact inventory epoch to each GUI handler, CLI
command, API route, script entry, and MCP tool; proves a cut property over the
declared route graph; associates each route with a typed normalizer and durable
sink; and requires route-specific re-preparation when execution changes
interface. The formal model relates pre/post-state realized transitions only
for supported pairs at a fixed authoritative state. The evidence then attacks
inventory omissions, cut bypass, normalizer omission, route identity, sink
rebinding, and profile binding, while independently checking paired traces.
This discharges a concrete coverage-and-refinement premise for the reference
profile rather than restating consequence integrity as a system-wide claim.

\subsection{Positioning summary}

Table~\ref{tab:positioning} states the boundary compactly. ``Provided'' denotes
the primary object represented in the cited work, not an assertion that other
properties are absent from its implementation.

\begin{table}[t]
\caption{Primary object and the boundary addressed here.}
\label{tab:positioning}
\small
\begin{tabularx}{\columnwidth}{L{0.20\columnwidth} Y Y}
\toprule
Work & Primary object & Boundary relative to this paper \\
\midrule
CSAgent \cite{gong2025csagent} & Context-aware system policy across CUA
interfaces & We verify a versioned route inventory, path cut, and per-route
refinement/commit binding. \\
AIRGuard \cite{qin2026airguard} & Action-time authority and heterogeneous tool
normalization & We treat normalization soundness and route coverage as separate,
mutation-tested obligations. \\
Secure-CUA \cite{choudhary2026securecua} & GUI action integrity under untrusted
content & We bind every registered interface route to an effect and durable
sink, including non-GUI routes. \\
Temporary Authority \cite{santosgrueiro2026temporary} & Commit-time validity of
authority evidence & We establish a versioned route/sink surface and route-bound
ticket before applying commit-time validation. \\
AID-Guard \cite{tong2026aidguard} & Stateful authorization through provider
settlement and recovery & We place route-bound preparation and interface-switch
re-fencing before sink settlement. \\
EBL-Core \cite{wu2026eblcore} & Conformance from materialized intent to execution
grant & We bind concrete surface identity, inventory epoch, and surface cut to
the grant redemption path. \\
CONTINUITY \cite{zheng2026continuity} & Composable context contracts and
consequence integrity & We discharge a finite hybrid-stack surface-closure and
concrete-refinement obligation. \\
This paper & Versioned effect-surface profile and commit protocol & Registered
GUI, CLI, API, script, and MCP routes; route-specific tickets; state/trace and
receipt verification. \\
\bottomrule
\end{tabularx}
\end{table}

\section{System, Threat, and Assurance Model}
\label{sec:system}

\subsection{System roles}

The system contains six logical roles. A \emph{profile authority} publishes a
versioned surface inventory and its digest-bound coverage witness. Surface
\emph{adapters} receive concrete GUI, CLI, API, script, or MCP actions and feed
typed events to a \emph{normalizer}. An external \emph{authorization broker}
evaluates canonical effects and stateful trace constraints, then issues
path-specific prepare tickets. An \emph{authoritative state oracle} returns
version, binding, epoch, and sink information for the effect's state key.
\emph{Commit adapters} place the final check adjacent to each durable sink.
The \emph{sink} performs the effect once and returns a digest-bound receipt
containing its before/after state relation.

The agent, planner, generated code, surface-specific request producer, and
transport between adapters are outside the trusted base. A deployment may
combine roles physically, but their logical obligations remain distinct. In
particular, a normalizer does not decide policy, a frontend approval does not
commit an effect, and an agent-supplied success string is not an authoritative
receipt.

\begin{figure}[t]
\centering
\begin{tikzpicture}[
  node distance=4mm and 9mm,
  box/.style={draw,rounded corners=2pt,minimum height=7mm,align=center,font=\scriptsize,inner sep=3pt},
  arr/.style={-{Latex[length=2mm]},line width=.65pt},
  every node/.style={font=\scriptsize}
]
\node[box,draw=scBlue,fill=scBlueFill] (surface) {GUI / CLI / API\\script / MCP};
\node[box,draw=scPurple,fill=scPurpleFill,right=of surface] (norm) {typed route\\normalizer};
\node[box,draw=scGreen,fill=scGreenFill,right=of norm] (prep) {policy + trace\\prepare};
\node[box,draw=scAmber,fill=scAmberFill,below=of prep] (state) {authoritative\\state oracle};
\node[box,draw=scGreen,fill=scGreenFill,right=of prep] (commit) {route-bound\\commit check};
\node[box,draw=scRed,fill=scRedFill,right=of commit] (sink) {durable sink\\+ receipt};
\draw[arr,scBlue] (surface) -- node[above]{event} (norm);
\draw[arr,scPurple] (norm) -- node[above]{effect} (prep);
\draw[arr,scGreen] (prep) -- node[above]{ticket} (commit);
\draw[arr,scAmber] (state) -- node[right]{snapshot} (prep);
\draw[arr,scAmber] (state) -| node[pos=.75,below]{current state} (commit);
\draw[arr,scRed] (commit) -- node[above]{redeem} (sink);
\draw[arr,scRed] (sink.south) -- ++(0,-3mm)
  -| node[pos=.25,below]{receipt} ([xshift=3mm]commit.south);
\end{tikzpicture}
\caption{Logical enforcement path. The surface witness establishes that every
registered entry-to-sink path crosses the broker/commit cut.}
\Description{Five interface classes feed a typed route normalizer, policy and
trace preparation, and a route-bound commit check. An authoritative state oracle
feeds prepare and commit, and the durable sink returns a verified receipt.}
\label{fig:architecture}
\end{figure}

\subsection{Adversary}

The adversary controls the agent and may choose any reachable concrete surface,
including a surface different from the one named during planning. It can modify
arguments, defaults, serialization, actor identifiers, task identifiers, route
identifiers, and grant material carried by its request. It may split a goal
across interfaces, retain output handles, and attempt a derived disclosure in a
later step. It can delay or replay messages, race authoritative state, reuse a
prepare ticket, re-prepare the same task/request pair on another interface,
substitute another sink, advance time beyond grant expiry, and exploit a stale
profile or inventory. The transport may duplicate or reorder requests; a
response may be lost after the sink has committed.

The adversary cannot break the digest/authentication mechanism, forge the
profile authority or sink, make an atomic state read report a value it did not
observe, or modify the broker and commit adapters without invalidating their
deployment identity. It cannot bypass a path that the accepted surface witness
correctly places behind the enforcement cut. These premises are made explicit
in Section~\ref{sec:formal}; a deployment that cannot establish one does not
receive the associated theorem.

\subsection{Attack classes}

We evaluate seven attack classes derived from the model:

\begin{description}[leftmargin=1.55em,style=nextline]
  \item[Surface omission.] An effect-bearing entry exists in the inventory
  domain but its route, normalizer, sink, or interceptor is omitted or rebound.

  \item[Alternate-route laundering.] A request denied on the intended surface
  is re-encoded through another registered surface to reach the same sink.

  \item[Representation substitution.] Security-relevant fields change between
  the surface syntax, canonical effect, ticket, and commit request.

  \item[Route switching.] A ticket prepared for one route is redeemed on
  another route whose defaults or parsing may differ, even when both normalize
  to a matched effect; the attempted transfer must fence the old ticket.

  \item[State and trace races.] A target binding, object version, authorization
  epoch, or committed effect prefix changes between prepare and commit.

  \item[Cross-surface recomposition.] Individually shaped operations compose
  into a derived effect---for the reference profile, a protected read followed
  by an external send---that the grant excludes.

  \item[Settlement ambiguity.] A ticket or a fresh cross-interface ticket for
  the same logical operation is replayed after success, a receipt is missing,
  or receipt fields disagree with the prepared effect and observed pre-state.
\end{description}

\subsection{Security goals}

The primary goal is \emph{profile-bounded effect confinement}: every effect
recorded as committed through a registered sink refines a registered concrete
action, belongs to an authorized stateful canonical trace, was committed from
the authoritative state bound at preparation, and has a matching sink receipt.
The goal has four supporting facets.

\begin{enumerate}
  \item \emph{Registered-path coverage}: every path from a registered effect-bearing
  entry to its declared durable sink crosses a designated enforcement cut.
  \item \emph{Matched-decision invariance}: supported surface encodings that
  normalize to the same canonical effect at the same state receive the same
  policy result.
  \item \emph{Path-specific execution}: changing routes invalidates the
  prepare ticket, atomically fences it before denial, and requires another
  stateful preparation.
  \item \emph{Outcome grounding}: \Commit{} requires a verified authoritative
  receipt; absent settlement evidence produces \Indeterminate{}.
\end{enumerate}

Availability is not a security goal. Re-preparation, conservative denial, and
indeterminate settlement can reduce completion. The protocol also does not
classify human intent, reason about pixels or command strings outside registered
normalizers, undo a committed external action, or protect sinks reachable
outside the accepted graph.

\subsection{Evidence tiers}

We keep three evidence tiers separate. \emph{Executable reference evidence}
comes from the local finite profile, deterministic corpus, two logical runtime
mappings, and authoritative-world model. \emph{Independent checking evidence}
comes from a checker that re-derives expected trace constraints and from
semantic mutation suites. \emph{External source evidence} pins public
revisions, file digests, licenses, and source assertions in two independent
runtime repositories and one public authorization substrate. Source evidence
shows that the selected insertion points and event shapes exist at those
revisions. It does not claim upstream deployment, upstream endorsement, or
production integration.

\section{Formal Model}
\label{sec:formal}

We formalize one accepted surface profile, a stateful authorization relation,
and a route-bound two-phase execution. The model relates concrete surface
events only where the profile supplies a typed normalizer and a declared
realized-transition relation. Appendix~\ref{app:proofs} proves the results.

\subsection{Surface profile and coverage witness}

Let $\mathcal I$ be a finite set of interface classes, $\Effects$ a set of
canonical effects, $\States$ the authoritative application states, and
$G_P=(V_P,A_P)$ a directed deployment graph. A profile is
\begin{equation}
\label{eq:profile}
 P=\langle id_P,\nu_P,\mathcal I_P,G_P,Entry_P,Dur_P,Cut_P,
 N_P,U_P,\Omega_P,Oracle_P,Receipt_P\rangle .
\end{equation}
Here $id_P$ and $\nu_P$ identify the profile and its monotone epoch;
$Entry_P(i)\subseteq V_P$ are effect-bearing entries for interface $i$;
$Dur_P\subseteq V_P$ are durable finality sinks; and $Cut_P\subseteq V_P$ is
the set of enforcement vertices. $N_P$ maps registered routes to typed
normalizers, while $U_P$ maps durable sinks to commit adapters. $\Omega_P$
defines canonical effects, security-relevant realized-transition projections,
and derived trace effects. $Oracle_P$ produces authoritative snapshots, and
$Receipt_P$ validates sink receipts.

A route record $r\in\Routes_P$ has
\[
 r=\langle rid,i,u,d,c,n,z\rangle,
\]
where $rid$ is unique, $i\in\mathcal I_P$, $u\in Entry_P(i)$,
$d\in Dur_P$, $c\in Cut_P$, $n=N_P(rid)$, and $z=U_P(d)$. Its graph slice
$G_P[r]$ contains the paths admitted from $u$ to $d$. The inventory
$Inv_P$ is the canonical sequence of route records. The implementation can
store the graph implicitly, provided the witness verifier reconstructs an
equivalent path relation.

\begin{definition}[Surface-closure witness]
\label{def:witness}
A surface-closure witness is
\[
 \chi_P=\langle id_P,\nu_P,H(G_P),H(Inv_P),H(N_P),H(U_P),H(Cut_P),\pi_P\rangle,
\]
where $\pi_P$ is accepted only if identities and digests match the active
profile, every inventory record is well typed, and
\begin{equation}
\label{eq:path-cut}
 \forall r\in\Routes_P,\ \forall \pi:u_r\leadsto d_r\text{ in }G_P[r],
 \qquad V(\pi)\cap Cut_P\ne\varnothing .
\end{equation}
We write $\mathsf{Closed}(P,\chi_P)$ when these checks succeed.
\end{definition}

Equation~\eqref{eq:path-cut} is a statement over the profile's graph and route
domain. Exact correspondence between that domain and the deployed effect
surface is Assumption~A1 below. Keeping the two obligations separate prevents
a proof over an incomplete graph from being called deployment coverage.

\subsection{Concrete events and canonical effects}

Let $A_i$ be the typed concrete events accepted from interface $i$. A
normalizer may consume an event incrementally; write
\begin{equation}
\label{eq:transducer}
 \mathcal T_{P,r}(q,a,S^-)\longrightarrow
 \begin{cases}
   (q',\mathbf e,w), & \text{complete and accepted},\\
   (q',\epsilon,w), & \text{awaiting more input},\\
   \bot, & \text{malformed, ambiguous, or unsupported},
 \end{cases}
\end{equation}
where $q$ is parser state, $w$ records the route-local parse witness, and
$\mathbf e\in\Effects$. Atomic API or tool calls use a one-step transducer.
The reference effect schema is
\begin{equation}
\label{eq:effect}
\begin{aligned}
\mathbf e=\langle{}&principal,operation,resource,recipient,fields,\\
                    &quantity,purpose,dataClass,stateKey,sinkId\rangle .
\end{aligned}
\end{equation}
Every field is present even when its value is an explicit neutral value. The
normalizer rejects unknown interfaces, route/family mismatches, unexpected
event kinds, endpoint or tool mismatches, missing fields, and sink mismatch.

Let
\[
 \mathsf{Step}_{P,d}(S^-,\mathbf e)=(S^+,\rho)
\]
denote a successful authoritative sink transition, and let
$\Delta_P(S^-,S^+,\rho)$ project its security-relevant realized transition.
Here $\rho$ is the sink-internal receipt; its external delivery is a separate
observation event.
This relation belongs to $\Omega_P$; it is supplied and audited per registered
effect family rather than inferred from arbitrary code.

\begin{definition}[State-indexed matched surface actions]
\label{def:matched}
Concrete actions $(r_1,a_1)$ and $(r_2,a_2)$ are matched at authoritative state
$S$, written $(r_1,a_1)\equiv^{P,S}_{\Omega}(r_2,a_2)$, when both routes are
registered, their completed transducers return the same canonical effect
$\mathbf e$, and their declared sink refinements at $S$ produce the same
$\Delta_P$ projection whenever either transition commits.
\end{definition}

The definition is intentionally state indexed. Equal labels do not assert
extensional equivalence across every possible application state. It also
distinguishes \emph{policy equivalence} from \emph{ticket portability}: matched
actions can receive the same decision while their route identifiers remain
different.

\subsection{Stateful authorization language}

An authorization state is
$\theta=\langle g,H_E,\Lambda,\epsilon_A,t\rangle$, where $g$ is the active
grant, $H_E\in\Effects^*$ is the authoritative admitted-effect history,
$\Lambda$ maps returned handles to security labels, $\epsilon_A$ is the
current authorization-policy epoch, and $t$ is authoritative time. An
authorization environment $\eta_j=\langle g_j,\epsilon_{A,j},t_j\rangle$
is refreshed before admission $j$. Writing $H_{j-1}$ and $\Lambda_{j-1}$ for
the history and labels in $\theta_{j-1}^+$, define
\begin{equation}
\label{eq:authority-refresh}
\begin{split}
 \theta_j^-&=\mathsf{Refresh}_P(\theta_{j-1}^+,\eta_j)\\
 &=\langle g_j,H_{j-1},\Lambda_{j-1},\epsilon_{A,j},t_j\rangle.
\end{split}
\end{equation}
An admissible refresh resolves the grant, current policy epoch, and time from
the authorities in A4, while preserving the entire admitted history and handle
labels. Prepare and commit may refresh these values separately; between
admissions, another grant may become active, but a prepared ticket still binds
its exact grant body. The policy epoch is distinct from the application-state
epoch returned by $Oracle_P$. Each grant binds a half-open validity interval
\[
 I_g=[\mathsf{notBefore}(g),\mathsf{expiresAt}(g)),
 \qquad \mathsf{Valid}(g,t)\iff t\in I_g.
\]
$H(g)$ denotes a fresh digest of the canonical grant body, not a digest value
asserted inside a request; a malformed or non-exact grant schema is rejected.
Let
\[
 \mathsf{Auth}_P(\theta,\mathbf e)\in\{\Allow,\Deny\}
\]
be deterministic and interface erased: it reads the canonical effect and
state, requires $\mathsf{Valid}(g,t)$ and
$\mathsf{policyEpoch}(g)=\epsilon_A$, and does not read the concrete interface
name. This relation checks the candidate direct
effect against its active grant. The authoritative transition
$\delta_A(\theta,\mathbf e,\rho^{int})$ appends an admitted effect and updates
its returned-handle labels from the sink's internally recorded receipt, at the
same atomic boundary as the sink transition. The selected grant, policy epoch,
and time remain those of $\theta_j^-$ until another admissible refresh. In the
reference profile, a document-read receipt adds its output handle to $\Lambda$
with the effect's data-class label; earlier labels are retained. External
delivery or verification of the receipt is not a precondition for this
authority-state update.

The trace semantic $Derive_{\Omega_P}(H_E\cdot\mathbf e,\Lambda)$ maps a
candidate extension of the admitted history to declared derived effects. It
rescans the full history and the candidate using the current authoritative
labels, rather than treating a new interface or grant as a fresh trace. For
example, the reference profile records the label on a document-read output
handle; a later
message to an external recipient that consumes a confidential handle derives a
$data.disclose$ effect. A separate interface-erased relation
$\mathsf{TraceAuth}_P(\theta,D)\in\{\Allow,\Deny\}$ checks these cumulative
derived effects. In the reference profile it allows exactly when $D$ is empty
or the current grant's \mcode{allowDerivedDisclosure} flag is true. Earlier
direct effects retain their earlier grants: they are not required to fit the
next candidate's exact effect ceiling. A profile may declare finer cumulative
quantitative, purpose, or provenance constraints, but the reference rule claims
only the stated disclosure condition.

For an initial state $\theta_0^+=\theta_0$ with empty history, write
$H_j=\mathbf e_1\ldots\mathbf e_j$ and
$D_j=Derive_{\Omega_P}(H_{j-1}\cdot\mathbf e_j,\Lambda_{j-1})$, where
$\Lambda_{j-1}$ is preserved by the refresh. The accepted language is
\begin{equation}
\label{eq:trace-language}
\begin{split}
\mathcal L_A(\theta_0)=\{\mathbf e_1\ldots\mathbf e_k:\;&
 \exists\{\eta_j,\rho_j^{int},\theta_j^-,\theta_j^+\}_{j=1}^k,\\
&\forall j\in[1,k],\ \mathsf{Refresh}_P(\theta_{j-1}^+,\eta_j)
 =\theta_j^-,\\
&\mathsf{Auth}_P(\theta_j^-,\mathbf e_j)=\Allow,\\
&\mathsf{TraceAuth}_P(\theta_j^-,D_j)=\Allow,\\
&\theta_j^+=\delta_A(\theta_j^-,\mathbf e_j,\rho_j^{int})\}.
\end{split}
\end{equation}
The existential witnesses must be admissible authority refreshes and internal
receipts of the corresponding realized sink transitions; they are not
agent-chosen reinterpretations of an observed execution. Membership therefore
retains a separate authoritative environment for each admission, including
admissions whose receipt has not been delivered externally.

\subsection{Route-bound prepare and commit}

For effect $\mathbf e$, the state oracle returns
\begin{equation}
\label{eq:state-witness}
 W=Oracle_P(S,\mathbf e)=
 \langle epoch,stateKey,version,targetBinding,sinkId,H_W\rangle,
\end{equation}
where $H_W$ authenticates the preceding fields. A successful prepare emits
\begin{equation}
\label{eq:ticket}
\begin{split}
\tau=\langle &actor,taskId,requestId,ik,rid,sinkId,H(\mathbf e),H(g),
H(\chi_P),\\
&H(P),H(Inv_P),W,H(H_E),status=\mcode{PREPARED},tid\rangle,
\end{split}
\end{equation}
where $ik=H(\langle taskId,requestId\rangle)$ is the logical-operation
idempotency key and $tid=H(\mcode{ticket},\text{all preceding fields})$.
The grant digest binds both endpoints of $I_g$. The broker retains the
authoritative prepared record; possession of a self-constructed tuple is
insufficient.

Let $\theta_{prep}$ be the refreshed prepare environment with the supplied
authoritatively accepted grant and current history and labels.
Prepare for $(r,a)$ succeeds exactly when:
\begin{enumerate}
  \item $\mathsf{Closed}(P,\chi_P)$ holds and $r$ resolves in $Inv_P$;
  \item $\mathcal T_{P,r}$ completes with $\mathbf e$ and the declared sink;
  \item $\mathsf{Valid}(g,t_{prep})$,
  $\mathsf{Auth}_P(\theta_{prep},\mathbf e)=\Allow$, and
  $\mathsf{TraceAuth}_P(\theta_{prep},
  Derive_{\Omega_P}(H_E\cdot\mathbf e,\Lambda))=\Allow$; and
  \item $W$ is the authoritative snapshot for $\mathbf e$.
\end{enumerate}
These checks preserve the current accepted admitted prefix and authorize a
candidate extension; they do not assert that its future sink transition,
internal receipt, or $\delta_A$ update has already occurred.

Commit receives $\langle\tau,r',a',g',actor'\rangle$. It resolves and
normalizes $(r',a')$ anew to $\mathbf e'$. Its refreshed $\theta_{now}$ uses
the authoritatively accepted $g'$, current policy epoch and time, and the
complete current history and labels. It checks
\begin{equation}
\label{eq:commit-guard}
\begin{split}
\mathsf{Guard}_C\iff{}&
 \mathsf{Issued}(\tau)\wedge \neg\mathsf{Consumed}(tid)
 \wedge \neg\mathsf{Fenced}(tid)\\
&\wedge \neg\mathsf{ConsumedIK}(ik_\tau)
 \wedge ik_\tau=H(\langle taskId_\tau,requestId_\tau\rangle)\\
&\wedge actor'=actor_\tau\wedge H(g')=H(g)_\tau
 \wedge rid'=rid_\tau\wedge sinkId'=sinkId_\tau\\
&\wedge H(\mathbf e')=H(\mathbf e)_\tau
 \wedge H(\chi_P)=H(\chi_P)_\tau\\
&\wedge H(P)=H(P)_\tau\wedge H(Inv_P)=H(Inv_P)_\tau\\
&\wedge Oracle_P(S_{now},\mathbf e')=W_\tau
 \wedge H(H_{E,now})=H(H_E)_\tau\\
&\wedge t_{now}=\mathsf{AuthoritativeTime}()
 \wedge \mathsf{Valid}(g',t_{now})\\
&\wedge \mathsf{Auth}_P(\theta_{now},\mathbf e')=\Allow
 \wedge \mathsf{TraceAuth}_P(\theta_{now},D_{now})=\Allow,
\end{split}
\end{equation}
where $D_{now}=Derive_{\Omega_P}(H_{E,now}\cdot\mathbf e',\Lambda_{now})$.
If any conjunct fails before finality, the result is \Deny{} and the sink is not
invoked. A resolved route mismatch is a distinguished failure transition: the
adapter atomically removes the prepared record, records
$\mathsf{Fenced}(tid)$, and denies without invoking the sink. If the guard
holds, the commit adapter atomically consumes both $tid$ and $ik$ with the sink
transition, internally records $\rho^{int}$, and applies $\delta_A$ to the
authoritative history and labels. A valid receipt must bind $tid$, $ik$, task,
request, route, sink, effect digest, before-state digest, after-state digest, settlement, and a
monotone sink sequence. The internally recorded receipt may be unavailable to
the external observer because delivery or verification has failed. Let
$\rho^{obs}$ denote the receipt available to that observer. The externally
reported outcome is
\begin{equation}
\label{eq:outcome}
 outcome=
 \begin{cases}
  \Commit, & \mathsf{Guard}_C\wedge Receipt_P(\rho^{obs},\tau,S^-,S^+),\\
  \Deny, & \neg\mathsf{Guard}_C\text{ before a sink transition},\\
  \Indeterminate, & \text{a transition may have occurred but no valid receipt is available}.
 \end{cases}
\end{equation}

An interface switch replaces $rid'$ while $rid_\tau$ remains fixed, so the
route-mismatch transition atomically fences the old ticket. Because no
idempotency key is consumed by this pre-finality denial, the agent can prepare
the matched target action with the same task/request identity under its own
route. The policy result may remain \Allow{}, but the execution witness is
fresh and path specific. If any ticket carrying that $ik$ has already
committed, every later ticket carrying the same $ik$ is denied at finality.

\subsection{Assumptions}

Table~\ref{tab:assumptions} states the deployment premises. Each theorem uses
the relevant subset rather than treating implementation evidence as proof of a
premise outside the profile.

\begin{table*}[t]
\caption{Assumptions for the formal results.}
\label{tab:assumptions}
\small
\begin{tabularx}{\textwidth}{L{0.055\textwidth} L{0.22\textwidth} Y}
\toprule
ID & Assumption & Requirement \\
\midrule
A1 & Exact profile identity & The accepted $P$, inventory, graph, entries,
sinks, and cut correspond exactly to the deployed protected effect surface for
the declared profile epoch. \\
A2 & Witness-verifier soundness & A witness is accepted only when its canonical
identities, digests, typing rules, and graph-cut predicate are valid. \\
A3 & Sink-normalizer refinement & Every accepted concrete route action
normalizes to its security-relevant canonical effect, and the declared
$\Delta_P$ relation matches the authoritative sink transition. \\
A4 & Authoritative identity, state, and time & Actor identity, authorization
grant and current policy epoch, target binding, version, application-state epoch,
admitted trace prefix, labels, and current time are resolved from authenticated,
linearizable authorities. Environment refresh cannot erase or replace the
admitted history or provenance. \\
A5 & Cross-interface provenance continuity & Security labels, handle ancestry,
task/request identity, and profile identity survive every registered interface
transition consumed by $\Omega_P$. \\
A6 & Interface-erased policy & For fixed $\theta$ and $\mathbf e$,
$\mathsf{Auth}_P$ and the derived-effect check $\mathsf{TraceAuth}_P$ are
deterministic and independent of the surface syntax that produced the effect. \\
A7 & Linearizable commit and fencing & State comparison, ticket and
idempotency-key consumption, route-mismatch ticket fencing, and durable sink
admission with internal receipt recording and history/label updates use
authoritative atomic transitions; stale, transferred, and duplicate redemptions
cannot pass finality. \\
A8 & Receipt and reconciliation integrity & Each admitted transition has a
durable internal receipt binding its unique realized transition, ticket, and
logical-operation idempotency key. Receipt validation checks that agreement;
delivery loss preserves the admission and authority-state records and supplies
no evidence of non-execution. \\
A9 & Cryptographic/domain binding & Canonical encoding is unambiguous, digests
bind their domains and fields, and authenticators used for profiles, state,
tickets, and receipts are unforgeable for the adversary. \\
\bottomrule
\end{tabularx}
\end{table*}

\subsection{Local invariants}

The construction maintains eleven local invariants. They are useful both for
proof decomposition and for mapping the executable checks to the theory.

\begin{description}[leftmargin=2.3em,style=nextline]
  \item[L1 (path intersection).] Every accepted registered entry-to-sink path
  intersects $Cut_P$.
  \item[L2 (registered resolution).] No ticket is issued for a route absent
  from the active inventory.
  \item[L3 (typed normalization).] Every issued ticket contains exactly one
  complete canonical effect produced by the route's registered normalizer.
  \item[L4 (decision congruence).] Matched actions at the same authorization
  state receive the same policy decision.
  \item[L5 (prepare authorization).] Every issued ticket binds an effect whose
  direct check and cumulative derived-effect check allow in its refreshed
  prepare environment, with a grant current at issuance.
  \item[L6 (ticket binding).] Actor, task, request, route, sink, effect, grant,
  idempotency key, validity interval, coverage, profile, inventory, state, and
  trace prefix are immutable ticket bindings.
  \item[L7 (route non-transferability).] A ticket issued for $rid$ cannot
  authorize commit on $rid'\ne rid$; a resolved mismatch atomically fences the
  old ticket before denial.
  \item[L8 (state and time freshness).] A committing ticket's authoritative
  snapshot equals the snapshot read at its commit linearization point, and the
  authoritative time lies within its bound grant interval.
  \item[L9 (trace freshness).] A committing ticket's history digest equals the
  complete authoritative admitted prefix, including transitions with lost
  external receipt delivery. The direct and cumulative derived-effect checks
  allow in the current commit environment.
  \item[L10 (single admission).] At most one sink transition is admitted for
  one ticket identifier, and at most one is admitted across all tickets that
  carry the same logical-operation idempotency key.
  \item[L11 (receipt agreement).] \Commit{} is reported only when a verified
  receipt agrees with the ticket, idempotency key, effect, and authoritative
  pre/post-state.
\end{description}

\subsection{Security results}

Let $C$ denote a concrete execution composed of registered actions, broker
events, and sink events. Let $\alpha_P(C)$ erase transport details and list the
canonical effects of all protected authoritative sink admissions in their
linearization order, including admissions externally observed as
\Indeterminate{}. Internal receipts and the corresponding authority environments
are retained as witnesses for this abstraction. A subsequence of externally
reported commits is not substituted for the complete admitted prefix.

\begin{theorem}[Profile-bounded concrete refinement]
\label{thm:refinement}
Under A1--A4 and A7--A9, if $\mathsf{Closed}(P,\chi_P)$ and a protected sink
admits a concrete action through the protocol, then that action has one
registered route,
one canonical effect $\mathbf e$, and one authoritative realized transition
whose $\Delta_P$ projection is the declared refinement of $\mathbf e$.
\end{theorem}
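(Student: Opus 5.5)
The plan is to trace the admission backwards from the sink to the ticket and then to the concrete action, using the local invariants as intermediate lemmas. The three uniqueness claims (route, effect, transition) are handled separately.

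First, fix a protected sink admission. By A1 the sink is some $d\in Dur_P$ and the admitting path starts at a registered entry. By $\mathsf{Closed}(P,\chi_P)$, A2, and \eqref{eq:path-cut}, that path crosses $Cut_P$; this is L1. In the profile, the enforcement vertex adjacent to a durable sink is the commit adapter $U_P(d)$. So the admission occurred only after $\mathsf{Guard}_C$ in \eqref{eq:commit-guard} held for some presented ticket $\tau$ and concrete pair $(r',a')$. A7 says the guard evaluation, consumption of $tid$ and $ik$, and sink transition form one atomic step, so I may reason about a single linearization point. At that point:
\begin{itemize}
\item $\mathsf{Issued}(\tau)$ holds, and by A9 the broker-held record cannot be forged.
\item The prepare conditions therefore held when $\tau$ was issued. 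In particular $r_\tau$ resolved in $Inv_P$ (L2), and $\mathcal T_{P,r_\tau}$ completed with an effect $\mathbf e$ whose digest is bound in $\tau$ (L3, L6).
\item The conjunct $rid'=rid_\tau$ gives L7, so $r'=r_\tau$.
\item Uniqueness of $rid$ in route records together with A9's unambiguous encoding makes this route the \emph{one} registered route.
\end{itemize}

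Second, I identify the canonical effect. Commit re-normalizes $(r',a')$ to $\mathbf e'$, and the guard requires $H(\mathbf e')=H(\mathbf e)_\tau$. By A9 this gives $\mathbf e'=\mathbf e$. Determinism of the registered transducer and A3 then mean $a'$ has exactly one security-relevant canonical effect. The conjunct $sinkId'=sinkId_\tau$ fixes the sink $d$ as the one declared in $\mathbf e$.

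Third, I turn to the transition. L10, derived from A7 and the atomic consumption of $tid$ and $ik$, gives at most one sink transition for this ticket and logical operation. A8 binds that transition's internal receipt $\rho^{int}$ to $tid$, $ik$, and a unique realized pair $(S^-,S^+)$. The guard's conjunct $Oracle_P(S_{now},\mathbf e')=W_\tau$, read under A4 at the linearization point, identifies $S^-$ as the authoritative pre-state (L8). Then $\mathsf{Step}_{P,d}(S^-,\mathbf e)=(S^+,\rho^{int})$, and A3 states that $\Delta_P(S^-,S^+,\rho^{int})$ is the declared refinement of $\mathbf e$.

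The main obstacle I expect is the first step: showing that the admission really passed through $U_P(d)$ with this guard, rather than through some other cut vertex or an unregistered path. That requires combining A1's exact correspondence with the witness's graph statement carefully, since \eqref{eq:path-cut} alone only says some cut vertex is hit. I would argue that every cut vertex on a path into $d$ is the commit adapter $U_P(d)$ or the broker, which forwards only to it. I would also argue that A1 excludes admissions through paths outside $G_P[r]$, because the sink is protected and the profile is exact.

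A secondary subtlety is the \Indeterminate{} case. There the theorem must still hold, so the argument must rely on the internal receipt from A8, not the observed receipt $\rho^{obs}$.
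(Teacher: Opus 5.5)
Your argument follows the paper's proof of this theorem: ticket origin (L2, L3, L6 via Lemma~\ref{lem:ticket-origin}), the exact-route conjunct (Lemma~\ref{lem:route}), effect-digest equality, snapshot equality at $lin(c)$ (Lemma~\ref{lem:fresh}), A3 for the $\Delta_P$ projection, and the \emph{internal} receipt (Lemma~\ref{lem:receipt-agreement}). Relying on that internal receipt is what covers the \Indeterminate{} case.

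The one divergence is your first step, and the obstacle you expect there does not arise. The paper's proof does not derive passage through $\mathsf{Guard}_C$ from L1 or $\mathsf{Closed}(P,\chi_P)$. It reads it directly off the hypothesis that the sink admits the action \emph{through the protocol}. The paper invokes A1 and L1 for bypass exclusion in the proofs of Theorems~\ref{thm:composition} and~\ref{thm:confinement}, not here.

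Drop that step rather than try to prove your bridging claim. Definition~\ref{def:witness} does not relate $Cut_P$ to $U_P(d)$ or to the broker. In the reference graph the shared cut even sits before the normalizer, not next to the durable sink. So ``the cut vertex adjacent to $d$ is $U_P(d)$, or a broker forwarding only to it'' cannot be derived from the model.

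A small point: $rid'=rid_\tau$ alone does not show that commit resolves the same route \emph{record} and re-normalizes with the same transducer. For that, also cite the $H(P)$ and $H(Inv_P)$ conjuncts of the guard.
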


\begin{theorem}[Matched-effect authorization invariance]
\label{thm:invariance}
Under A3, A4, A6, A7, and A9, if
$(r_1,a_1)\equiv^{P,S}_{\Omega}(r_2,a_2)$ and both are evaluated from the same
authorization state $\theta$, then their policy decisions are equal. If
$r_1\ne r_2$, presenting a ticket for $r_1$ on $r_2$ atomically fences that
ticket; committing through $r_2$ requires a fresh prepare.
\end{theorem}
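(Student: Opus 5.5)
The proof has two parts that match the two sentences of Theorem~\ref{thm:invariance}. The first part is decision equality, which I will derive from Definition~\ref{def:matched} and A6. The second part is route non-transferability, which I will derive from the commit guard \eqref{eq:commit-guard} together with A7 and A9.

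\emph{Decision equality.} Definition~\ref{def:matched} requires that both completed transducers return the same canonical effect $\mathbf e$.
\begin{itemize}
\item By A3, this $\mathbf e$ is the security-relevant normalization of each concrete action. The decision inputs on the two routes are therefore the pair $(\theta,\mathbf e)$ in both cases.
\item By hypothesis, both actions are evaluated from the same $\theta$. A4 ensures that $\theta$ (grant, policy epoch, time, history $H_E$, labels $\Lambda$) comes from authoritative sources. It is not supplied by the route, so neither route can inject a different environment.
\item By A6, $\mathsf{Auth}_P(\theta,\mathbf e)$ is deterministic and independent of surface syntax, so the direct decisions coincide.
\item For the trace check, $D=Derive_{\Omega_P}(H_E\cdot\mathbf e,\Lambda)$ depends only on $H_E$, $\mathbf e$ and $\Lambda$. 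Hence $D$ is identical on both routes, and A6 applied to $\mathsf{TraceAuth}_P$ gives equal trace decisions.
\item The prepare decision is the conjunction of these checks plus $\mathsf{Valid}(g,t)$. Every conjunct is route-independent, so the overall decision is equal. This is invariant L4.
\end{itemize}

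\emph{Route non-transferability.} Suppose $r_1\ne r_2$, a ticket $\tau$ was issued for $r_1$, and it is presented with $(r_2,a_2)$.
\begin{itemize}
\item By L6 and A9, $rid_\tau=rid_1$ is an immutable binding covered by $tid$. The broker keeps the authoritative record, so the adversary can neither forge nor alter a ticket naming $rid_2$.
\item Commit resolves $r_2$ anew, yielding $rid'=rid_2\ne rid_\tau$. The conjunct $rid'=rid_\tau$ of $\mathsf{Guard}_C$ therefore fails.
\item The protocol defines a resolved route mismatch as the distinguished fencing transition. By A7 it is atomic: it removes the prepared record, sets $\mathsf{Fenced}(tid)$, denies, and never invokes the sink. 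This is L7.
\item Since $\neg\mathsf{Fenced}(tid)$ is now false permanently, $\tau$ can never pass $\mathsf{Guard}_C$ afterwards on any route.
\end{itemize}

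It remains to show that any commit through $r_2$ needs a fresh prepare. A commit through $r_2$ must present some issued ticket $\tau'$ with $rid_{\tau'}=rid_2$. By A9, tickets are only obtained from the broker's prepare. So $\tau'$ came from a prepare on $r_2$, necessarily distinct from $\tau$, since $\tau$ binds $rid_1$. That prepare is possible because the fencing denial consumed no idempotency key: $\mathsf{ConsumedIK}(ik)$ is untouched, so the same task/request may be re-prepared.

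The main obstacle is atomicity under concurrency. A commit of $\tau$ on $r_1$ could race with the mismatched presentation on $r_2$. The fencing step must then either be ordered after a consume, in which case the transfer is rejected by $\mathsf{Consumed}(tid)$, or before it, in which case the $r_1$ commit sees $\mathsf{Fenced}$. I will discharge this by placing both at linearization points under A7 and checking each of the two orderings. A secondary subtlety is the idempotency key across two distinct tickets sharing $ik$; L10 keeps this consistent, because at most one admission per $ik$ occurs.
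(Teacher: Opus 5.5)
Your proposal is correct and follows essentially the same route as the paper. Decision equality comes from Definition~\ref{def:matched}, the shared authoritative $\theta$ under A4, and A6 applied to both $\mathsf{Auth}_P$ and $\mathsf{TraceAuth}_P$, as in the paper's Lemma~\ref{lem:congruence}; route non-transferability comes from the exact $rid'=rid_\tau$ conjunct of $\mathsf{Guard}_C$, A7-atomic fencing, A9 binding, and the fact that the fencing denial consumes no $ik$, as in Lemma~\ref{lem:route} and the paper's theorem proof. Your racing-commit case analysis and your necessity argument via $\mathsf{Issued}$ only spell out what the paper leaves implicit in A7 and the broker-retained ticket record.
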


\begin{theorem}[Mixed-interface laundering and recomposition prevention]
\label{thm:composition}
Under A1--A9, an unauthorized effect cannot become committable merely by
being encoded through another registered interface. Moreover, if every
registered prefix preserves the provenance consumed by $\Omega_P$, every
admission in every prefix of $\alpha_P(C)$ passes its own direct-grant check
and the cumulative derived-effect check in its authoritative commit environment.
\end{theorem}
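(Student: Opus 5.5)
The proof splits into two parts that follow the two sentences of Theorem~\ref{thm:composition}. Both parts reduce to the commit guard~\eqref{eq:commit-guard} and the local invariants L1--L11. Theorem~\ref{thm:refinement} and Theorem~\ref{thm:invariance} are used as black boxes.

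For the laundering claim, fix a concrete action $(r_2,a_2)$ that reaches a protected sink. Suppose its canonical effect $\mathbf e$ is unauthorized in the authoritative commit environment, meaning $\mathsf{Auth}_P(\theta_{now},\mathbf e)=\Deny$ or the cumulative check fails. The argument proceeds in four steps.

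\begin{enumerate}
\item By A1, A2 and L1, every path from $u_{r_2}$ to its durable sink crosses $Cut_P$. So the sink is reached only through a commit adapter.
\item By Theorem~\ref{thm:refinement}, any admission has a registered route and a unique canonical effect whose $\Delta_P$ projection is the declared one. A3 therefore prevents re-encoding from changing security-relevant fields.
\item By A6 and Theorem~\ref{thm:invariance}, the decision on $\mathbf e$ does not depend on which interface produced it. An encoding through $r_2$ that is matched with a denied encoding through $r_1$ is therefore also denied at the same $\theta$.
\item A ticket prepared on $r_1$ cannot carry over to $r_2$, by L7 and the fencing of A7. Any commit on $r_2$ then needs its own guard, which re-evaluates $\mathsf{Auth}_P$ and $\mathsf{TraceAuth}_P$ at commit time (L9). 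That guard fails, so by~\eqref{eq:outcome} the result is \Deny{} with no sink transition.
\end{enumerate}

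A9 rules out forged tickets, state witnesses, and grant digests. A4 rules out a forged actor or time, so none of these can be used to get around the guard.

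For the recomposition claim, I would argue by induction on the length $k$ of a prefix of $\alpha_P(C)$. The claim is that for each admission $j\le k$ there exist admissible $\eta_j$, $\rho_j^{int}$ and $\theta_j^{\pm}$ such that the prefix lies in $\mathcal L_A(\theta_0)$ as given by~\eqref{eq:trace-language}.

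\begin{itemize}
\item \textbf{Witness construction.} For admission $j$, A7 fixes a single linearization point. At that point the guard held with environment $\theta_{now}$. Take $\eta_j$ to be the grant, policy epoch and time resolved there under A4. This is admissible because A4 forbids refresh from erasing history. Take $\rho_j^{int}$ to be the internal receipt given by A8.
\item \textbf{History matches.} The conjunct $H(H_{E,now})=H(H_E)_\tau$, together with A9 and L9, shows that the history used at commit equals the full admitted prefix $H_{j-1}$. This includes admissions whose receipts were lost, because A8 keeps those records. Hence $\mathsf{Refresh}_P(\theta_{j-1}^+,\eta_j)=\theta_j^-$ holds with the same $H_{j-1}$ and $\Lambda_{j-1}$.
\item \textbf{Checks pass.} The guard's conjuncts then give $\mathsf{Auth}_P(\theta_j^-,\mathbf e_j)=\Allow$ and $\mathsf{TraceAuth}_P(\theta_j^-,D_j)=\Allow$. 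Here $D_j$ is exactly $D_{now}$.
\item \textbf{Transition.} A7 makes $\delta_A$ atomic with the sink transition, which yields $\theta_j^+$.
\item \textbf{Single admission.} L10 ensures each logical operation contributes only one admission, so duplicates cannot put unchecked effects into $\alpha_P(C)$.
\end{itemize}

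The main obstacle is showing that $D_j$, computed by $Derive_{\Omega_P}$, really sees the cross-interface derived effect. For example, a confidential read via the GUI followed by an external send via MCP must derive $data.disclose$. This needs the label attached to the read handle to be present in $\Lambda_{j-1}$ when the send is checked, even though the send arrives on a different route. I would handle it by combining three facts:

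\begin{itemize}
\item the provenance-preservation hypothesis together with A5, which says labels and handle ancestry survive each registered interface transition;
\item the fact that $\delta_A$ records labels from the internal receipt rather than the delivered one (A8), so the label exists even if delivery failed;
\item the fact that $Derive$ rescans the entire history instead of a route-local slice.
\end{itemize}

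The remaining case is when label state changes between prepare and commit. This is handled by trace freshness, L9: the commit guard recomputes the trace check against current labels, so only the commit environment matters. Any change to history since prepare makes the digest conjunct fail, so the guard denies in that case.
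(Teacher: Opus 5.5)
Your proposal is correct and follows essentially the paper's argument. The laundering part uses the same ingredients: L1 and A1 against bypass, A3 so that a distinct effect is evaluated on its own, A6 for matched encodings, and L7 for route switching. You arrange these as a chain closed by commit-time re-evaluation, where the paper uses a case split; your recomposition induction inlines the witness construction that the paper packages as Lemma~\ref{lem:prefix}, with A5 and the provenance hypothesis supplying the cross-interface labels (one small point: equality of the commit-time history with the full admitted prefix comes from A4 and A7, as recorded in L9, not from the digest conjunct, which only ties it to the prepare-time prefix).
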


\begin{theorem}[Commit-time state safety]
\label{thm:state}
Under A4, A7, and A9, a ticket prepared against $W$ commits only from an
authoritative state whose oracle snapshot equals $W$ and only while its bound
grant interval contains authoritative commit time. A mismatch in the bound
version, target binding, epoch, sink, or admitted trace prefix at commit, or
expiration after prepare, forces denial before the candidate sink transition.
\end{theorem}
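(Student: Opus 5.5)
The plan is to argue directly from the commit guard \eqref{eq:commit-guard} and the atomicity premise A7, so the proof is largely a case analysis on what must hold at the linearization point of any sink admission. Fix a ticket $\tau$ prepared against snapshot $W_\tau$ with bound grant digest $H(g)_\tau$ and history digest $H(H_E)_\tau$. By the protocol, the sink is invoked only on the branch where $\mathsf{Guard}_C$ holds; every other branch, including the distinguished route-mismatch fencing transition, returns \Deny{} without a sink call. So it suffices to show that (i) whenever $\mathsf{Guard}_C$ holds at the commit linearization point, the authoritative snapshot equals $W_\tau$ and $t_{now}\in I_g$, and (ii) each listed mismatch falsifies some conjunct of $\mathsf{Guard}_C$ at that point.

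For (i), I would use A7 to place the state comparison $Oracle_P(S_{now},\mathbf e')=W_\tau$, the history-digest comparison, the time check, and the sink admission in one atomic transition, so no interleaving can change $S$ between the check and the sink step; this is local invariant L8. A4 says $Oracle_P$, the admitted prefix, and $t_{now}=\mathsf{AuthoritativeTime}()$ come from authenticated linearizable authorities, so the values compared are the true ones at that point rather than agent-supplied claims. A9 rules out the adversary supplying a forged $W$ or a self-constructed ticket: $W_\tau$ carries the authenticator $H_W$, and $\tau$ must satisfy $\mathsf{Issued}(\tau)$ against the broker's retained record, so $W_\tau$ is exactly the snapshot read at prepare. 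Grant-interval containment follows from the conjuncts $H(g')=H(g)_\tau$ and $\mathsf{Valid}(g',t_{now})$. Since by A9 the digest binds both endpoints of $I_g$, $I_{g'}=I_g$ and hence $t_{now}\in I_g$.

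For (ii), I would go through the mismatches one by one. A changed version, target binding, or application-state epoch changes a field of $Oracle_P(S_{now},\mathbf e')$. A changed sink changes either the $sinkId$ field of $W$ or the conjunct $sinkId'=sinkId_\tau$. A changed admitted prefix changes $H(H_{E,now})$, including admissions whose receipts were lost, since A4 and A8 keep those in the authoritative history (L9). Expiration gives $t_{now}\notin I_g$, which falsifies $\mathsf{Valid}$. In each case injectivity of the canonical encoding and digests (A9) turns the semantic difference into inequality of the compared values, so the guard fails before finality and the outcome is \Deny{} by \eqref{eq:outcome}.

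The main obstacle is the atomicity step: showing that ``checked equal'' at the guard implies ``committed from'' that state. A naive time-of-check/time-of-use reading would let a concurrent writer change $S$ between evaluation and sink admission. I would handle this by treating A7 as asserting one linearization point that covers both the guard evaluation and $\mathsf{Step}_{P,d}$, then arguing by contradiction that any interleaved state change would be linearized before the check and therefore already visible to it. A secondary subtlety is whether $\mathbf e'$ rather than $\mathbf e_\tau$ indexes the oracle read. This is discharged by the conjunct $H(\mathbf e')=H(\mathbf e)_\tau$ together with A9, which makes the two effects equal, so the oracle reads the same state key.
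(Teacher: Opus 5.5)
Your proposal is correct and follows essentially the paper's route. The paper cites Lemma~\ref{lem:fresh}, which compares $Oracle_P(S_{now},\mathbf e')$ with $W_\tau$ and the history digest with $H(H_E)_\tau$ (authoritative by A4, atomic with admission by A7, unforgeable by A9), then reads off the field mismatches and the $\mathsf{Valid}(g',t_{now})$ conjunct; you do exactly this with that lemma inlined. One small tightening: you do not need A8 to place lost-receipt admissions in the compared history, since A7 makes the history update atomic with admission and A4 forbids refresh from erasing it, which keeps your argument within the theorem's stated hypotheses A4, A7, A9.
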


\begin{theorem}[Receipt-grounded outcome correctness]
\label{thm:receipt}
Under A7--A9, each reported \Commit{} corresponds to one verified sink receipt
and at most one realized transition for its ticket. Across distinct tickets
with the same logical-operation idempotency key, at most one transition is
admitted. A missing externally verifiable receipt after a possible transition
yields \Indeterminate{} and supplies no evidence of non-execution.
\end{theorem}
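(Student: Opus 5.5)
The proof splits Theorem~\ref{thm:receipt} into its three claims: commit correspondence, idempotency-key uniqueness, and the indeterminate case. Each claim is discharged from the outcome rule~\eqref{eq:outcome} together with invariants L10 and L11. The work is in justifying those two invariants from A7--A9.

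For the first claim, I unfold~\eqref{eq:outcome}. \Commit{} is reported only when $\mathsf{Guard}_C$ holds and $Receipt_P(\rho^{obs},\tau,S^-,S^+)$ accepts. Acceptance means $\rho^{obs}$ binds $tid$, $ik$, task, request, route, sink, effect digest, before- and after-state digests, settlement, and a monotone sink sequence. By A9 the adversary cannot forge such a receipt, so it must have been produced by the sink. By A8 a sink produces a receipt only as the durable internal receipt of an admitted transition, and that receipt binds a unique realized transition to $tid$ and $ik$. This gives the ``one verified receipt'' part.

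For ``at most one realized transition'', I use A7. The commit adapter consumes $tid$ atomically with the sink transition, and $\neg\mathsf{Consumed}(tid)$ is a conjunct of $\mathsf{Guard}_C$. Because these steps are linearizable, any two redemptions of $tid$ have a linearization order. The second one sees $\mathsf{Consumed}(tid)$ and is denied before finality. This establishes the per-ticket half of L10. The cross-ticket claim follows the same pattern with $ik$. $\mathsf{Guard}_C$ also requires $\neg\mathsf{ConsumedIK}(ik_\tau)$, and $ik_\tau=H(\langle taskId_\tau,requestId_\tau\rangle)$ is recomputed, so by A9 two tickets for the same task/request pair carry the same key. $ik$ is consumed in the same atomic step as the admission, so a later admission with that key fails the guard at its linearization point.

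I also need to rule out an apparent loophole. A route-mismatch fencing does not consume $ik$, so a fresh cross-interface ticket for the same $ik$ can still commit. That is consistent with the claim, because the fenced ticket never reached the sink, so there is still only one admission.

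For the third claim, I handle the case where $\rho^{obs}$ is missing or invalid after a possible transition. By~\eqref{eq:outcome} the outcome is neither \Commit{}, since receipt verification fails, nor \Deny{}, since that branch requires $\neg\mathsf{Guard}_C$ before any sink transition. It is therefore \Indeterminate{}. A8 says delivery loss preserves the admission and authority-state records, so the internal receipt and the $\delta_A$ update may exist while $\rho^{obs}$ is absent. The absence of $\rho^{obs}$ is thus compatible with an execution that did happen. I would make ``no evidence of non-execution'' precise by exhibiting two executions the observer cannot tell apart: one where the transition was admitted and the receipt was lost, and one where it was not admitted. Both yield the same observation, so the outcome cannot soundly be \Deny{}.

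The main obstacle is making the atomicity argument rigorous: the check of $\neg\mathsf{Consumed}$ and $\neg\mathsf{ConsumedIK}$ and the consumption must share one linearization point with the sink admission. I would state this explicitly as the content of A7, rather than derive it, so that concurrent, replayed, or reordered redemptions reduce to a sequential case analysis.
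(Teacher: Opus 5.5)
Your proposal is correct and follows essentially the same route as the paper. The paper gets the first two claims from Lemma~\ref{lem:single} (linearization order plus atomic consumption of $tid$ and $ik$ under A7) and Lemma~\ref{lem:receipt-agreement} (A8 agreement validation with A9 binding), and the third from the same case analysis on Equation~\eqref{eq:outcome}, with A8 preserving the internal receipt after delivery loss. Your explicit indistinguishable pair of executions (admitted with the receipt lost, versus never admitted) only sharpens the paper's remark that the caller lacks evidence of execution or non-execution; it is not a different argument.
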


\begin{theorem}[End-to-end profile confinement]
\label{thm:confinement}
Under A1--A9, for every finite concrete execution $C$ whose protected sink
admissions use the accepted profile,
\begin{equation}
\label{eq:confinement}
 \alpha_P(C)\in\mathcal L_A(\theta_0),
\end{equation}
every element has a registered route refinement at its admission-time state and
a unique agreeing internal receipt. Every externally reported \Commit{} also
has a verified agreeing observed receipt. Lost external delivery does not
remove an admitted effect from $\alpha_P(C)$ or reset its provenance.
\end{theorem}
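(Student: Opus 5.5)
The plan is an induction on the length $k$ of $\alpha_P(C)=\mathbf e_1\ldots\mathbf e_k$, the sequence of protected sink admissions in linearization order (A7). Along the way I will build the existential witnesses $\{\eta_j,\rho_j^{int},\theta_j^-,\theta_j^+\}$ demanded by \eqref{eq:trace-language}. The inductive hypothesis is twofold. First, $\mathbf e_1\ldots\mathbf e_{j-1}\in\mathcal L_A(\theta_0)$. Second, the authoritative authority state right after the $(j-1)$-th admission is exactly $\theta_{j-1}^+$, with history $H_{j-1}$ and labels $\Lambda_{j-1}$. The base case is immediate: $\alpha_P(C)$ is empty and the history is empty by the choice of $\theta_0$.

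\textbf{Inductive step.} For the $j$-th admission, A7 provides a commit linearization point at which $\mathsf{Guard}_C$ held. I take $\eta_j$ to be the authoritative environment refreshed at commit, namely $\langle g',\epsilon_{A,now},t_{now}\rangle$, and I take $\rho_j^{int}$ to be the internal receipt recorded atomically with the sink transition (A8). I then need $\theta_{now}=\mathsf{Refresh}_P(\theta_{j-1}^+,\eta_j)$. A4 says a refresh cannot erase or replace the admitted history or the labels. A7 says no other admission linearizes between the $(j-1)$-th and the $j$-th. Together these give $H_{E,now}=H_{j-1}$ and $\Lambda_{now}=\Lambda_{j-1}$. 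The guard conjuncts $\mathsf{Auth}_P(\theta_{now},\mathbf e')=\Allow$ and $\mathsf{TraceAuth}_P(\theta_{now},D_{now})=\Allow$ are then literally the two membership conditions for index $j$. Finally, $\theta_j^+=\delta_A(\theta_j^-,\mathbf e_j,\rho_j^{int})$ holds because $\delta_A$ is applied at the same atomic boundary as the sink step. This re-establishes the invariant for the next index.

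\textbf{Main obstacle.} The hard part is showing that the effect $\mathbf e'$ evaluated in the guard is the same effect $\mathbf e_j$ that the sink realized. The same identification is also what gives ``a registered route refinement at admission-time state'' for each element. For this I will invoke Theorem~\ref{thm:refinement} on the $j$-th admission. It yields one registered route, one canonical effect, and one realized transition whose $\Delta_P$ projection refines that effect. A3 then identifies this effect with $\mathbf e'$, and A9 identifies it with the digest bound in the ticket. Theorem~\ref{thm:state} pins the pre-state used by the sink to the oracle snapshot $W_\tau$, which is the authoritative state at admission. The labels that $\mathsf{TraceAuth}_P$ reads must come from the actual provenance of the handles, not from a reset performed on an interface switch. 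Here I would cite A5 together with the second clause of Theorem~\ref{thm:composition}, which already gives the per-prefix direct and cumulative checks under A1--A9.

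\textbf{Receipt claims.} Uniqueness of the agreeing internal receipt per element follows from A8 and the single-admission invariant L10, packaged as Theorem~\ref{thm:receipt}. The claim that every reported \Commit{} has a verified agreeing observed receipt is read off directly from \eqref{eq:outcome} and L11. For lost external delivery, $\alpha_P$ is defined over admissions rather than reported commits. A8 says delivery loss preserves both the admission and the authority-state record, so the element stays in the sequence and its $\delta_A$ label update stays in $\Lambda$. The induction therefore carries the labels forward unchanged, which rules out a provenance reset.
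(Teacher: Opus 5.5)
Your proposal is correct and follows essentially the paper's proof. Both argue by induction over all protected admissions in A7 linearization order, including \Indeterminate{} ones. Each element's route refinement comes from Theorem~\ref{thm:refinement}. The membership step combines the history- and label-preserving refresh with the guard's $\mathsf{Auth}_P$/$\mathsf{TraceAuth}_P$ conjuncts. The receipt and lost-delivery claims come from L10, L11, and Lemma~\ref{lem:receipt-agreement} via A8. The one structural difference is that you build the witnesses $\eta_j,\rho_j^{int},\theta_j^\pm$ inline rather than citing Lemma~\ref{lem:prefix}.

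You should also add the paper's explicit appeal to A1 and L1 (Lemma~\ref{lem:witness-cut}), which rules out a protected admission that bypasses the cut. Your step ``A7 provides a commit linearization point at which $\mathsf{Guard}_C$ held'' silently relies on this.
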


The theorems do not turn $\mathsf{Closed}(P,\chi_P)$ into a statement about an
unmodeled host. A1 supplies the deployment correspondence; the witness and
mutation evidence test the internal consistency and sensitivity of the
declared profile.

\section{Design}
\label{sec:design}

The protocol separates deployment enrollment from operation-time
enforcement. Enrollment establishes what the deployment promises to mediate;
operation-time checks bind a concrete action to that promise and to current
authoritative state.

\subsection{Profile enrollment}

The profile authority begins from durable effect families, not from user-facing
controls. For each family it identifies the authoritative sink and walks
backward to every declared entry through which the effect can be requested.
Each route record names the interface, entry, sink, interceptor, normalizer,
and commit adapter. The profile verifier enforces uniqueness and totality:
route identifiers cannot alias, every required interface/family pair has one
record, every sink has a registered adapter, every route has a normalizer, and
the normalizer's declared sink agrees with the route.

Graph validation then establishes Equation~\eqref{eq:path-cut}. The reference
profile uses a single logical interceptor identifier, although the formal model
allows a cut containing several vertices. The verifier recomputes canonical
digests for the inventory and profile, then derives the coverage-witness digest
from those values. Configuration epoch changes therefore invalidate prepared
tickets even when a route name remains textually unchanged.

Enrollment is repeated whenever an effect-bearing handler, command, endpoint,
script, MCP tool, normalizer, interceptor, or durable sink changes. A newly
discovered route is not accepted under the previous profile. It is enrolled in
a new inventory epoch, connected to the cut, assigned a typed normalizer and
sink adapter, and only then becomes eligible to prepare.

\subsection{Typed surface normalization}

Normalization is a refinement boundary rather than a convenience parser. Each
interface has a distinct accepted event kind:

\begin{itemize}
  \item GUI events bind a registered submit handler and typed form;
  \item CLI events bind a registered command identifier and parsed flags;
  \item API events bind a method, endpoint identifier, and request body;
  \item script events bind an entry identifier and typed argument record; and
  \item MCP events bind a server identifier, tool name, and typed arguments.
\end{itemize}

The normalizer produces the canonical tuple in Equation~\eqref{eq:effect} and
rejects omission of any security-relevant field. Defaults must be materialized
before authorization. A route cannot override its family operation or sink,
and a concrete request cannot choose a normalizer. This design keeps policy
independent of surface syntax without erasing the route identity needed for
safe execution.

Canonical-effect equality is necessary but not sufficient for matched status.
The profile also declares the authoritative pre/post-state projection for the
effect family. A GUI handler and API route are related only at states for which
both refinements produce the same security-relevant transition. This guards
against a comparison based solely on similar operation labels while one route
sets additional defaults or invokes a different sink behavior.

\subsection{Prepare}

Prepare performs seven ordered checks. It resolves the route under the current
inventory; verifies the coverage witness; normalizes the concrete action;
checks the effect against the grant; derives and checks cumulative trace
effects; reads authoritative state; and persists a ticket with the bindings in
Equation~\eqref{eq:ticket}. Failure at any stage returns a typed denial and does
not create a ticket.

The ticket carries both semantic and deployment bindings. The effect digest
fixes the policy object. The route and sink bind its concrete path. The profile,
inventory, and coverage digests fix the deployed closure claim. The state
witness fixes the target version, binding, epoch, and sink at preparation. The
trace-prefix digest fixes the already committed history against which
composition was authorized. Actor, task, request, and grant digests prevent
context substitution. The pair $\langle taskId,requestId\rangle$ also derives
an authoritative idempotency key shared by every re-preparation of the same
logical request, including a re-preparation on another interface.

\subsection{Commit and authoritative receipts}

Commit does not trust the earlier parse or a caller-supplied claim that nothing
changed. It resolves the active route and normalizes the concrete action again,
then evaluates Equation~\eqref{eq:commit-guard}. It recomputes the canonical
grant digest from the exact grant body and checks the body-bound
$\mathsf{notBefore}$/$\mathsf{expiresAt}$ interval against authoritative time.
The commit adapter places state comparison, ticket
consumption, idempotency-key consumption, and sink admission at the durable
boundary. A successful transition advances the authoritative object version
and returns a receipt binding before and after state, effect, route, ticket,
task, request, idempotency key, sink, and sequence.

The response discipline distinguishes denial from uncertain settlement.
Failures detected before the finality boundary are denials with no candidate
effect. Once the sink may have committed, lack of a valid receipt is
indeterminate. The caller must reconcile by ticket or idempotency identity; it
must not submit a semantically equivalent fresh action on the assumption that
the original failed. A fresh route ticket with the same task/request pair
cannot commit after the logical operation's idempotency key has been consumed.
This rule preserves at-most-once admission under response loss and interface
switching.

\subsection{Interface switching}

Figure~\ref{fig:switch} shows a route switch. The semantic policy is applied to
the same canonical effect on both routes. Nevertheless, the GUI ticket embeds
$rid_{gui}$, so a CLI redemption atomically removes and fences that prepared
ticket before denying without a sink effect. A fresh CLI prepare obtains a new
state witness and trace prefix and produces $\tau_{cli}$. Because the denied
attempt did not consume the logical-operation idempotency key, only the fresh
CLI ticket can reach the CLI-bound commit.

\begin{figure}[t]
\centering
\begin{tikzpicture}[
  node distance=10mm and 18mm,
  box/.style={draw,rounded corners=2pt,minimum height=6.5mm,align=center,font=\footnotesize,inner sep=3pt},
  arr/.style={-{Latex[length=2mm]},line width=.65pt},
  bad/.style={-{Latex[length=2mm]},line width=.65pt,dashed},
  every node/.style={font=\footnotesize}
]
\node[box,draw=scBlue,fill=scBlueFill] (gui) {GUI prepare\\$\tau_{gui}$};
\node[box,draw=scRed,fill=scRedFill,right=of gui] (reject) {CLI redemption\\fence $\tau_{gui}$; deny};
\node[box,draw=scPurple,fill=scPurpleFill,below=of gui] (cli) {CLI re-prepare\\$\tau_{cli}$};
\node[box,draw=scGreen,fill=scGreenFill,right=of cli] (accept) {CLI commit\\receipt};
\draw[bad,scRed] (gui) -- node[above]{switch} (reject);
\draw[arr,scPurple] (reject.south west) -- node[above,sloped]{fresh state} (cli.north east);
\draw[arr,scGreen] (cli) -- node[above]{exact route} (accept);
\end{tikzpicture}
\caption{Matched policy decisions do not make execution tickets transferable.
An interface change requires route-specific preparation.}
\Description{A GUI-prepared ticket is atomically fenced and denied when
redeemed on a CLI route. The CLI route obtains a fresh state witness and ticket
before commit.}
\label{fig:switch}
\end{figure}

This rule is stricter than binding only the canonical effect. It prevents a
prepared effect from being paired with another route's parser, defaults,
authorization context, or sink adapter. It also provides a precise audit event:
the fenced cross-route redemption and subsequent preparation appear as
separate transitions. Returning later to the original route cannot redeem the
fenced ticket.

\subsection{Cross-surface trace closure}

Per-action authorization can miss a harmful composition. The reference trace
closure demonstrates this with a read/send sequence. A committed protected read
returns an opaque handle labeled from the read effect's data class. A later
message normalizer records its input handles. Before issuing a send ticket, the
trace derivation checks whether an external recipient would receive content
derived from a confidential or restricted handle. If so, it adds a canonical
$data.disclose$ effect. The send is allowed only when the grant admits both the
direct message effect and the derived disclosure.

The handle and label survive a change of interface under A5. Thus reading via
GUI and sending via MCP receives the same trace decision as reading via API and
sending via CLI. The reference rule is deliberately small and inspectable. A
deployment may replace it with a richer taint, provenance, or information-flow
semantics while retaining the ticket and commit protocol.

\subsection{Fail-closed profile evolution}

Tickets bind the profile, inventory, and coverage witness independently. This
redundancy makes distinct configuration failures observable. A modified
normalizer changes the profile digest; an added route changes the inventory;
and a changed graph or cut changes the coverage witness. Any mismatch forces
re-preparation against the new epoch. An unregistered interface is denied even
if its event happens to contain fields resembling a canonical effect. Surface
syntax is never accepted as proof that a route belongs to the mediated set.

\section{Reference Implementation}
\label{sec:implementation}

The reference implementation is a deterministic ECMAScript artifact organized
around immutable JSON inputs and newline-delimited trace outputs. It realizes
the profile-bounded subset described here; the formal model permits richer
normalizers, policy relations, state authorities, and sink adapters.

\subsection{Registered profile and corpus}

The frozen corpus identifier is
\mcode{surface-closure-12x5-v1}. It defines 12 effect families across six
application domains: document read/write/delete; message send/delete and
channel invite; calendar create/cancel; ticket create/close; ledger record; and
access grant. Each effect family has GUI, CLI, API, script, and MCP encodings,
yielding 60 registered routes. The surface profile has its own identifier,
profile epoch 23, inventory identifier, and inventory epoch 17. The reference
authorization relation accepts policy epoch 31; the authoritative world's
initial state epoch is 41 and is varied by the state-race probes. These are
distinct bindings, and the evaluation does not model a dynamic policy-update
service. Six durable
sink classes cover documents, messaging, calendar, tickets, ledger, and access.

The inventory also materializes an explicit route-labeled directed graph. Each
of the 60 routes follows the registered template entry $\rightarrow$ adapter
$\rightarrow$ shared cut $\rightarrow$ normalizer $\rightarrow$ sink
$\rightarrow$ durability. Sharing the cut node and keeping the other nodes
route specific yields 301 nodes and 300 edges. Validation requires each route's
entry to reach its durability node in the original graph and become unreachable
when the common cut is removed. The graph digest is bound into the coverage
witness rather than inferred from the interceptor label alone.

The family definitions contain synthetic resources, recipients, purposes,
state keys, data classes, and typed fields. Each of eight instances per family
materializes a distinct resource and state key. Exact grants copy every
security-relevant field into an effect ceiling. Denial cases change a resource
or recipient while preserving a superficially valid surface event.

The reference normalizers consume pre-structured envelopes: they validate
route-specific action discriminators, require all ten canonical fields, and
check operation and sink agreement. They do not implement free-form command
parsing or full nested value-type validation. Exact canonical grant comparison
and the declared transition relation remain separate enforcement obligations;
the normalizer mutation matrix measures the implemented envelope checks.

\subsection{Broker and authoritative world}

The broker resolves route identifiers through maps derived from the validated
inventory. Its prepare method invokes the route-specific normalizer, exact
effect policy, exact grant-schema and canonical-digest validation, grant
validity interval, and stateful trace policy, then
snapshots authoritative state and persists a digest-bound ticket. Its commit
method retrieves that stored ticket and checks canonical equality against the
caller's copy. It recomputes the canonical grant-body digest rather than
trusting the carried digest field. It rejects a consumed ticket and rechecks authoritative time,
actor, grant, coverage, profile, inventory, route, sink, effect, trace prefix,
policy, and current state before calling the sink. A resolved route mismatch
atomically removes the prepared record, records the ticket as fenced, and
returns a typed denial before calling the sink.

The authoritative-world model maintains current time, a global epoch,
per-state-key versions and target bindings, committed canonical effects, handle
labels, receipts, consumed ticket identifiers, and consumed idempotency keys.
It also retains fenced ticket identifiers as externally checkable state. A
commit atomically consumes both the ticket and the task/request-derived
idempotency key, appends the effect, advances the object version, and returns a
receipt. Document reads additionally return labeled output handles. Fault
injection can independently change time, version, binding, or epoch between
prepare and commit.

\subsection{Two runtime mappings}

The \emph{direct-call} mapping invokes the broker through in-process typed
calls. The \emph{queued-plan} mapping serializes each payload through a wire
representation, wraps it in a sequenced topic envelope and correlation
identifier, then deserializes it before broker handling. Their control and
serialization structures differ, but both consume the same immutable scenario
corpus and surface witness. The evidence compares their decisions, realized
effects, receipts, and relevant event order per scenario.

These are logical runtime adapters for the reference protocol. They are not
represented as executions of WeaveBench, Magentic-UI, or any other upstream
runtime. The external-source audit described in Section~\ref{sec:evaluation}
is a separate evidence tier.

\subsection{Frozen-source schema refinements}

Two additional adapters accept envelopes shaped after selected, digest-pinned
public WeaveBench and Magentic-UI runtime sources. They do not modify or execute
the upstream repositories. Each requires an explicit mediation sidecar that
binds the source revision, native tool route and surface, active profile digest,
coverage-witness digest, and digest of the embedded raw action. The adapter
rejects a missing or mismatched sidecar, unregistered native tool, and negative
source approval before passing the mapped route event to the broker. The
positive matrix instruments GUI and CLI shapes for all 12 effect families; the
negative matrix mutates each source-specific binding.

The sidecar is a necessary trust transition: a source-shaped tool envelope is
input data, not execution authority. These refinements demonstrate a concrete
mapping point from two independently developed runtime schemas to the reference
protocol. They are not upstream integrations or measurements of upstream
security behavior.

The evidence layer separately runs a bounded selection of tests at the frozen
public OpenPort revision. These exercise the public authorization and
durable-execution substrate, including leases, obligations, state storage,
idempotency, and effect receipts. The Paper~12 graph, multi-interface inventory,
normalizers, and trace-closure rules remain separate additions and receive
their own evidence.

\subsection{Independent checker}

The checker reads the frozen corpus, profile, summary, and individual runtime
traces without importing the broker implementation. It verifies the expected
scenario denominator, unique runtime/scenario pairs, coverage/profile/inventory
bindings, event structure, ticket and receipt counts, denial reasons, route
switch fences, and realized-effect equality across the two mappings. It also
groups matched surface variants and checks the declared cross-interface
decision relation.

Checker regression tests mutate both outer summaries and semantically nested
trace fields, then reseal the modified structures where applicable. This tests
semantic validation rather than mere detection of a stale outer digest.

\subsection{Evidence records}

Each retained report uses canonical serialization and records a report digest.
The registered run additionally records the coverage-witness digest and trace
set digest. Coverage mutation, normalizer mutation, checker regression,
commit-race, baseline, schema-refinement, public-source, and public-substrate
reports have distinct schemas and denominators. No result is
inferred from a process exit status alone; every runner asserts its expected
cardinality and writes a structured PASS result only after all cases match.

\section{Evaluation}
\label{sec:evaluation}

The evaluation asks five questions. \textbf{RQ1}: Does the reference profile
produce the declared decision and effect for every registered scenario across
two runtime mappings? \textbf{RQ2}: Is the coverage witness sensitive to route,
sink, normalizer, cut, and identity faults? \textbf{RQ3}: Do commit bindings
reject state races, substitutions, and replays with the correct outcome
semantics? \textbf{RQ4}: Can an implementation-independent checker detect
semantic trace corruption? \textbf{RQ5}: Can explicit mediation adapters
preserve the protocol bindings for frozen source-shaped event schemas?

All reference experiments are deterministic enumerations of a frozen finite
corpus. We therefore report exact counts rather than sampling estimates. The
artifact requires Node.js 22 or later and has no runtime package dependency.
Each result file contains its own schema version, denominator, status, and
digest. The aggregate verifier binds ten component-report digests under report
digest \mcode{sha256:50bb8d08...1ff172}.

\begin{table}[t]
\caption{Evidence map for the five evaluation questions. Counts are exact
denominators from the retained finite corpus.}
\label{tab:evidence-map}
\small
\begin{tabularx}{\columnwidth}{L{0.08\columnwidth} L{0.36\columnwidth} Y}
\toprule
RQ & Retained evidence & Assurance question \\
\midrule
RQ1 & 3,464 traces across two mappings & Do declared decisions and effects agree? \\
RQ2 & 24 coverage and 720 normalizer mutants & Are route, cut, and normalization bindings checked? \\
RQ3 & 168 state, binding, replay, and receipt probes & Are finality semantics enforced at commit? \\
RQ4 & 26 independently checked semantic regressions & Does trace validation reject structured corruption? \\
RQ5 & 52 source checks; 48 positive and 15 negative adapter cases & Can frozen source-shaped schemas carry the required bindings? \\
\bottomrule
\end{tabularx}
\end{table}

\subsection{RQ1: Registered decisions across two mappings}

For each of 12 families, eight instances exercise each of five interfaces. The
corpus contains 480 matched allows and 480 matched denies per runtime. It adds
384 alternate-surface bypass attempts, 96 route-switch cases, 96 commit-time
state changes, 96 unregistered-surface requests, and 100 cross-surface
recomposition cases. Table~\ref{tab:scenario-results} gives exact outcomes.

\begin{table}[t]
\caption{Registered scenarios and retained outcomes. Counts in the rightmost
column aggregate the direct-call and queued-plan mappings.}
\label{tab:scenario-results}
\small
\begin{tabularx}{\columnwidth}{Y r r L{0.25\columnwidth}}
\toprule
Category & Per runtime & Traces & Observed outcome \\
\midrule
Matched allow & 480 & 960 & 960 commit \\
Matched deny & 480 & 960 & 960 deny \\
Alternate-surface bypass & 384 & 768 & 768 deny \\
Route-switch re-fencing & 96 & 192 & 192 commit after re-prepare \\
Commit-time state change & 96 & 192 & 192 deny \\
Unregistered surface & 96 & 192 & 192 deny \\
Cross-surface recomposition & 100 & 200 & 200 deny \\
\midrule
Total & 1,732 & 3,464 & 960 commit; 2,312 deny; 192 re-prepare commit \\
\bottomrule
\end{tabularx}
\end{table}

Both mappings matched every declared scenario outcome. The independent checker
accepted all 3,464 traces, all 1,732 direct/queued scenario pairs, and 192
matched-effect groups. The trace set contains 1,352 authoritative receipts:
960 from matched allows, 192 from route-switch cases committed after fresh
preparation, and 200 from the authorized read prefixes that precede denied
derived disclosures. A denied recomposition can therefore contain an earlier,
separately authorized read receipt without containing the prohibited external
send effect.

Denial reasons align with the test construction. Across both mappings, 1,728
traces reject a direct effect outside its exact grant, 200 reject trace closure,
192 reject stale state, and 192 reject an unregistered surface. Route-switch
cases atomically fence the old ticket on route substitution and then commit the
matched effect only under a newly prepared ticket. The accepted graph contains 301
nodes and 300 route-labeled edges under graph digest
\mcode{sha256:fc77e3ef...a1817}. The coverage witness digest for the run is
\mcode{sha256:a55ac06f...5eb6f}; the registered report digest is
\mcode{sha256:b64843bc...eab39d}.

\subsection{Ablation of required authorization obligations}

Table~\ref{tab:logical-baselines} is a protocol-mechanism ablation over the
same 1,732 scenario labels. Each row retains only the named obligations from
the reference decision rule. By modeling obligations rather than naming
systems, the ablation isolates the scenario classes that require route-specific
tickets, commit-state binding, and stateful trace closure.

\begin{table}[t]
\caption{Mechanism ablation over the frozen scenario oracle. Each row retains
only the obligations named in its first column.}
\label{tab:logical-baselines}
\small
\begin{tabularx}{\columnwidth}{Y r r}
\toprule
Control set & Scenarios correctly classified / 1,732 & Rate \\
\midrule
Interface-local guard & 576 & 33.3\% \\
Shared canonical policy & 1,440 & 83.1\% \\
Action-time normalization & 1,440 & 83.1\% \\
State-only invariant & 672 & 38.8\% \\
Composite control without route/state/trace binding & 1,440 & 83.1\% \\
All required obligations (reference protocol) & 1,732 & 100.0\% \\
\bottomrule
\end{tabularx}
\end{table}

The interface-local guard misses denials on non-GUI surfaces, alternate routes,
unknown routes, switching, state changes, and recomposition. Shared canonical
policy, action-time normalization, and the composite control without
route/state/trace binding each miss the same 292 cases: 96 route switches, 96
state changes, and 100 recompositions. The state-only invariant detects state
changes and unknown surfaces but misses 1,060 authorization, bypass,
switching, and recomposition cases. The result isolates the role of
route-specific tickets, commit-state binding, and stateful trace closure within
this corpus.

\subsection{RQ2: Coverage-witness mutation}

The coverage suite makes 24 single-fault mutants and requires each to fail
profile validation. It removes five routes, rebinds five sinks, removes five
normalizers, adds a real bypass edge around the declared cut on five route
graphs, changes one profile
binding, corrupts two route-identity conditions, and changes one sink-set
condition. All 24 are rejected. This includes both structural faults that can
break Equation~\eqref{eq:path-cut} and identity faults that could otherwise
reuse a proof for a different configuration. Each bypass graph retains
entry-to-durability reachability after removal of the cut and is rejected with
the typed cut-bypass error. The coverage-mutant report digest is
\mcode{sha256:008f6156...0a13e}.

Mutation adequacy is interpreted against the enumerated operator set. Killing
these mutants shows that each selected field or relation is semantically
checked; it does not substitute for A1's deployment-to-inventory
correspondence. The frozen public-source layer below adds independent evidence
about concrete runtime shapes, while installation-specific inventory review
remains part of profile enrollment.

A separate normalizer suite covers every combination of 12 families and five
interfaces. For each route it omits each of the ten required effect fields,
rebinds the route-specific handler, command, endpoint, script, or tool identity,
and changes the accepted action kind. All 720/720 mutants fail before
durability: 600 with \mcode{E\_NORMALIZE\_FIELD}, 60 with
\mcode{E\_NORMALIZE\_BINDING}, and 60 with \mcode{E\_NORMALIZE\_KIND}. Its report
digest is \mcode{sha256:5137bdd4...6153b0}. This exhausts the registered
field-omission and route/action-discriminator matrix. It tests the required-field
and route-binding obligations of L3; full value typing and sink refinement are
separate per-family obligations under A3.

\subsection{RQ3: Commit races and outcome semantics}

The race suite applies 14 fault types to each of 12 effect families, for 168
probes. All 168 produce the declared result. State version, target binding, and
authorization epoch changes account for 36 stale-state denials. Route
substitution contributes 12 denials that each atomically fence the old ticket.
Effect, actor, and grant substitutions contribute 36 further binding denials.
Thirty-six retained-digest probes alter the grant's effect ceiling, identity,
or expiry while retaining its old digest; all fail canonical grant-integrity
validation. Twelve prepare-then-expire probes deny against authoritative commit
time. These 132 pre-finality faults create no candidate effect.

Twelve same-ticket replay probes and 12 cross-interface fresh-ticket probes for
an already consumed logical-operation idempotency key each retain exactly one
authorized original effect, never a duplicate. Twelve
missing-authoritative-receipt probes execute one sink transition and return
\Indeterminate{}, not \Deny{}. The reported 132 ``no effect'' count therefore
refers only to faults rejected before finality; replay and response-loss probes
intentionally retain one original or possibly committed effect. The race report
digest is \mcode{sha256:ee1b5c43...1cdb6d}.

\subsection{RQ4: Independent semantic checking}

The independent checker accepts the original trace set and paired-runtime
relations. A separate suite constructs 26 semantic regressions: observed and
embedded verdict changes; reason, category, family, scenario, and runtime
substitution; coverage, profile, and inventory rebinding; ticket, receipt, or
effect omission; idempotency omission; denied-effect fabrication; receipt/effect
and receipt/settlement rebinding; route-fence event or state omission; event
payload, type, scenario, and runtime rebinding; paired-effect divergence; and
dropped or duplicated runtime pairs. All 26 are rejected, including mutants whose outer
records are consistently resealed. The checker-regression report digest is
\mcode{sha256:fd3134fe...e73a2}.

This test separates semantic checking from integrity checking. A digest detects
an unresealed byte change; the checker also rejects internally consistent but
semantically invalid trace structures.

\subsection{RQ5: Source-shaped adapter compatibility}

The source audit freezes three public repositories by commit, verifies their
licenses and selected file digests, and checks source assertions identifying
relevant runtime shapes or authorization substrate concepts. Table
\ref{tab:external-evidence} lists the retained denominator. The 52 checks over
12 files all pass. WeaveBench contributes a hybrid GUI/CLI/MCP runtime shape
\cite{li2026weavebench,weavebenchrepo2026}; Magentic-UI contributes tool-call,
approval, and pre/post-state shapes
\cite{mozannar2025magenticui,magenticuirepo2026}; OpenPort contributes public
authorization, durable-obligation, state, idempotency, and receipt substrate
\cite{publicauthrepo2026}.

\begin{table*}[t]
\caption{Frozen public-source audit. Revisions are shown in 12-character form;
the evidence manifest stores the complete identifiers.}
\label{tab:external-evidence}
\small
\begin{tabularx}{\textwidth}{L{0.18\textwidth} L{0.16\textwidth} L{0.12\textwidth} r r Y}
\toprule
Source & Revision & License & Files & Checks & Audited role \\
\midrule
WeaveBench & \mcode{acf203ae2438} & MIT & 2 & 11 & Hybrid
GUI/CLI/MCP action and command shape \\
Magentic-UI & \mcode{d3c9d13c3928} & MIT & 4 & 19 & Tool-call,
approval-policy, and pre/post-state observation shape \\
OpenPort & \mcode{e6d949c8ed81} & AGPL-3.0-or-later & 6 & 22 & Public
authorization, routing, durable execution, idempotency, and receipt substrate \\
\midrule
Total & --- & --- & 12 & 52 & Two external runtime sources and one public substrate \\
\bottomrule
\end{tabularx}
\end{table*}

We additionally implement frozen-source schema refinements for WeaveBench and
Magentic-UI. Each adapter accepts a source-shaped GUI or CLI tool envelope only
when it carries an explicit mediation sidecar binding the frozen source commit,
source tool route, profile digest, coverage-witness digest, and raw-action
digest. Across 12 families, two surfaces, and two sources, all 48 instrumented
positive calls commit with matching receipts. All 15 negative cases deny:
missing sidecars, source-commit mismatch, route/tool mismatch, profile,
coverage, or action-digest mismatch, unregistered tools, and---for
Magentic-UI---a negative source approval. The adapter report digest is
\mcode{sha256:ca5e4001...04694}.

These adapters are schema refinements around source-shaped envelopes. They
require an explicit mediation sidecar absent from an unmodified upstream call,
and they do not constitute deployment inside or execution of the upstream
products. Their evidence is that the frozen event shapes can be mapped into a
fail-closed protocol boundary without treating a native tool call as
self-authenticating authority.

Finally, the frozen OpenPort revision runs 77/77 tests across ten public test
files covering API contracts, security controls, capability leases, action
coordination, durable obligations, state storage, HTTP execution, idempotency,
and receipts. Those tests support the public authorization and durable-execution
substrate used as an external evidence source. They do not establish the GUI,
CLI, API, script, and MCP inventory or the matched-effect and recomposition
claims of this paper; those properties are tested by the Paper~12 profile and
checker. The public-source audit report digest is
\mcode{sha256:b5ca6af9...c2dc0d}; the public-test report digest is
\mcode{sha256:c92b419e...0c30f6}.

\subsection{Answer to the research questions}

The retained evidence answers each bounded question affirmatively. The two
logical runtime mappings agree on every registered scenario and realized
effect relation (RQ1). All 720 normalizer mutants fail before durability and
every selected coverage mutant is rejected (RQ2). All 168 state, substitution,
grant-integrity, expiry, replay, and ambiguity probes produce their declared
semantics (RQ3). The independent checker accepts the originals and rejects all
26 semantic regressions (RQ4). Two independently developed public runtime shapes
admit explicit fail-closed schema refinements, while a separately tested public
substrate supplies adjacent authorization and durability evidence (RQ5). These
results collectively support the accepted finite profile and assumptions;
they are not measurements of arbitrary host paths or uninstrumented upstream
deployments.

\section{Discussion and Assurance Boundary}
\label{sec:discussion}

\subsection{What the result establishes}

The core result is a concrete-to-abstract refinement contract for a versioned
hybrid-agent surface. It connects five facts that are often reviewed
separately: route inventory, enforcement-cut placement, typed normalization,
commit-time authoritative state, and receipt-grounded settlement. When the
assumptions hold, changing an interface cannot widen the canonical grant,
silently transfer an execution ticket, erase a trace-derived restriction, or
reuse a stale state decision.

This contract is useful even when a deployment already has a capable policy
engine. A Cedar-like authorization relation can evaluate the canonical effect
\cite{cedar2024}; the surface profile answers how a concrete GUI handler or MCP
tool reaches that relation and how the eventual sink transition remains bound
to it. Likewise, CONTINUITY's consequence-integrity contracts
\cite{zheng2026continuity} can consume a surface witness as evidence for a
declared effect-path premise. The protocol operationalizes that premise
for a finite stack rather than competing with policy languages or system-wide
context contracts.

\subsection{Deployment workflow}

A deployment can apply the profile in four controlled stages. It enumerates
durable sinks and walks backward to effect-bearing entries; registers typed
normalizers and authoritative transition projections; installs and verifies
the cut and commit adapters; then runs omission, rebinding, state-race, route
switch, recomposition, and receipt-loss conformance cases. A configuration
change advances the inventory or profile epoch and invalidates outstanding
tickets. Operational telemetry should retain ticket and receipt identifiers,
typed denial codes, profile digests, and reconciliation results without
recording unnecessary action payloads.

The strongest deployment evidence comes from independent observations of both
ends of the path: build or route metadata for the entry-to-cut graph, and
authoritative receipts at the durable sink. This endpoint emphasis follows the
end-to-end design argument that application-level correctness needs endpoint
evidence \cite{saltzer1984endtoend}. Source scanning alone can miss dynamic
registration or infrastructure routing; sink receipts alone cannot show that a
denied route lacked another bypass. The profile joins these evidence forms
under one version identity.

\subsection{Scope limits}

\paragraph{Profile correspondence.}
The graph-cut proof is as accurate as the deployment correspondence in A1. A
route absent from both deployment review and inventory is outside the theorem.
The appropriate response to discovering such a route is a new profile epoch,
not retroactive interpretation of the old witness.

\paragraph{Registered semantics.}
The normalizers and $\Delta_P$ relations are per-family specifications. They do
not decide arbitrary program equivalence, assign semantics to free-form shell
programs, or characterize side effects omitted from the registered state
projection. Shell or script access should therefore expose constrained typed
operations or remain outside the accepted profile.

\paragraph{Trusted state and sinks.}
The protocol relies on authoritative, linearizable state and commit adapters
\cite{herlihy1990linearizability}.
A compromised sink that performs an unreceipted hidden effect, a forged state
oracle, or a broken authenticator violates A4, A7, A8, or A9. Replicated sinks
must implement an equivalent fencing and receipt relation across failover.

\paragraph{Derived effects.}
The reference recomposition rule tracks labeled read handles into external
messages. It provides executable evidence for one declared cross-surface
composition class. Other channels---clipboard state, screenshots, timing,
shared files, encoded content, and external process communication---require
their own normalizers and provenance semantics or a stronger information-flow
mechanism such as those studied for agent systems \cite{costa2025ifc}.

\paragraph{Human authorization.}
The protocol preserves and revalidates a structured grant; it does not prove
that a human understood the request or that the grant reflects an ethically or
legally correct policy. Human-approval interfaces such as Magentic-UI
\cite{mozannar2025magenticui} can supply authorization input, while the route,
state, trace, and settlement bindings remain independently necessary.

\paragraph{Availability and recovery.}
Fail-closed profile changes, stale state, or route switches can require another
prepare. Receipt loss yields an indeterminate result requiring authoritative
reconciliation. This cleanly separates outcome confirmation from a broader
recovery policy, as studied for ambiguous durable effects
\cite{santosgrueiro2026temporary}. These behaviors preserve safety semantics
but do not guarantee latency or progress under a failing state authority or
sink.

\subsection{Evidence interpretation}

The finite corpus is exhaustive with respect to its generator: 12 declared
families, five routes per family, eight instances, seven scenario categories,
and two runtime mappings. It is not a population sample, so confidence
intervals would add no meaningful information. Generalization instead depends
on whether another deployment can establish A1--A9 for its own profile and
whether its effect and derived-effect relations cover the desired security
semantics.

The logical baseline table is a controlled decomposition of stated guarantees.
It must not be compared numerically with attack-defense rates reported by
CSAgent, AIRGuard, OSGuard, or other systems because their tasks, implementations,
and outcome definitions differ. The source audit and schema refinements likewise
show insertion-point compatibility at frozen revisions, not upstream security
certification. Maintaining these distinctions keeps every empirical claim tied
to the process and denominator that produced it.

\subsection{Broader applicability}

The profile structure extends beyond desktop agents wherever one logical effect
has several concrete entry forms: web forms and REST endpoints, interactive
administration and infrastructure-as-code, mobile UI and deep links, or local
automation and remote tools. Applying the result requires a versioned route
inventory, typed effect family, authoritative state relation, enforcement cut,
and finality receipt for that domain. The proof architecture remains unchanged;
the substantive work lies in constructing and reviewing those domain-specific
objects.

\section{Conclusion}
\label{sec:conclusion}

Hybrid computer-use agents turn one delegated task into actions across GUI,
CLI, API, script, and MCP surfaces. The protocol binds those concrete
routes to a versioned inventory, graph-cut witness, canonical effect relation,
route-specific prepare ticket, current authoritative state and trace prefix,
grant validity, logical-operation idempotency, and sink receipt. Under explicit
assumptions, this yields profile-bounded
refinement, matched-effect decision invariance without ticket portability,
mixed-interface confinement, commit-time state safety, and grounded outcome
semantics. The retained evaluation exercises 3,464 reference traces, coverage
and checker mutations, commit races, frozen-source schema refinements, and a
separately tested public authorization substrate. The result is a checkable
method for turning cross-interface mediation from an architectural expectation
into a versioned deployment obligation.

\section*{Data, Code, and Materials Availability}

The public preprint package contains only the manuscript TeX sources,
bibliography, and generated bibliography file needed to compile the paper.
The Paper 12 reference implementation, deployment profile, scenario corpus,
checker, traces, JSON reports, external-source audit receipts, and build
metadata are retained privately and are not distributed with this preprint.
No public code, data, trace, report, or artifact URL is claimed for those
materials. OpenPort and the audited upstream repositories remain independently
available at the public revisions cited in the paper; their public status does
not make the Paper 12 artifact public or imply upstream endorsement.

\appendix
\section{Proofs and Auxiliary Results}
\label{app:proofs}

This appendix derives L1--L11 and proves
Theorems~\ref{thm:refinement}--\ref{thm:confinement}. All executions are finite.
Equality of authenticated records means equality after the profile's canonical,
domain-separated encoding. We write $\tau[x]$ for field $x$ embedded in ticket
$\tau$ and $lin(c)$ for the linearization point of commit attempt $c$. An
admission is the authoritative sink transition, regardless of whether its
receipt is delivered externally. The environments $\eta_j$ and internal
receipts $\rho_j^{int}$ used below are those of the actual admitted execution.

\subsection{Witness and preparation lemmas}

\begin{lemma}[Accepted-witness cut]
\label{lem:witness-cut}
Under A1, A2, and A9, an accepted $\chi_P$ establishes L1 for every deployed
route in the declared protected domain, and its profile, inventory, graph,
normalizer, sink-adapter, and cut identities cannot be replaced without
invalidating the witness binding.
\end{lemma}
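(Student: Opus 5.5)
The lemma has two conjuncts: L1 for deployed routes, and identity binding of the witness. I would prove them separately and then combine.

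For the first conjunct, I would start from acceptance of $\chi_P$. By A2, acceptance implies that the verifier's checks actually hold. In particular, $\pi_P$ certifies Equation~\eqref{eq:path-cut}: for every $r\in\Routes_P$ and every path $\pi:u_r\leadsto d_r$ in $G_P[r]$, $V(\pi)\cap Cut_P\ne\varnothing$. Acceptance also gives that the identity and digest fields match the active profile, so $\mathsf{Closed}(P,\chi_P)$ holds. This alone is only a statement about the profile's graph.

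To lift it to deployed routes, I would invoke A1. A1 says the accepted inventory, graph, entries, sinks and cut coincide exactly with the deployed protected surface for the epoch $\nu_P$. Take any deployed route in the declared protected domain and any deployed entry-to-sink path along it. By A1, that route is some $r\in\Routes_P$, and the path is a path of $G_P[r]$ from $u_r$ to $d_r$. By A1 again, the deployed enforcement vertices are exactly $Cut_P$. Equation~\eqref{eq:path-cut} then yields the required intersection, which is exactly L1.

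For the second conjunct, I would use A9. The witness carries $H(G_P)$, $H(Inv_P)$, $H(N_P)$, $H(U_P)$, $H(Cut_P)$ together with $id_P$ and $\nu_P$. Suppose some component is replaced by $X'\ne X$ while the old witness is retained. Canonical encoding is unambiguous and digests are domain separated and binding, so $H(X')\ne H(X)$ except with negligible probability. The acceptance check compares the witness digests against recomputed digests of the active profile, so that check fails. Alternatively, the adversary could produce a new accepting witness for $X'$. That requires forging the profile authority's authenticator, which A9 also excludes. Either way, the binding is invalidated. I would treat the profile identity $H(P)$ as covered through $id_P$, $\nu_P$ and the authenticated profile record, again under A9.

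The main obstacle is the A1 lifting step. Equation~\eqref{eq:path-cut} quantifies over paths in the slice $G_P[r]$, not over the full graph. I must check that every deployed path along $r$ appears in $G_P[r]$, and that the implicitly stored graph reconstructs an equivalent path relation. I would discharge this by reading "correspond exactly" in A1 as equality of the path relation, which is the equivalence the implementation note requires the verifier to reconstruct. I would state explicitly that routes outside the declared domain are not covered.
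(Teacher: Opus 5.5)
Your proposal is correct and follows essentially the same route as the paper's proof: A2 makes the path-cut check sound, A1 transfers that graph fact to the deployed protected routes, and A9's binding digests and unforgeable authenticators mean a replaced object invalidates the original witness. One difference is that the paper also invokes A9 in the first part, to tie the graph the verifier actually checked to the object bound by the witness; your second-part substitution argument supplies that link, so nothing is missing.
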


\begin{proof}
By Definition~\ref{def:witness}, acceptance recomputes every canonical digest,
checks route typing, and verifies Equation~\eqref{eq:path-cut}. A2 makes this
verification sound for the supplied objects. A9 prevents an adversary from
substituting a distinct canonical object with the same accepted binding. Hence
every path represented in the accepted route graph intersects $Cut_P$. A1 maps
the accepted inventory and graph exactly to the deployed protected route
domain, extending the graph fact to those deployed routes. This is L1. Any
replacement of a bound object changes its canonical digest or authentication
identity under A9, so the original witness is no longer accepted.
\end{proof}

\begin{lemma}[Issued-ticket origin]
\label{lem:ticket-origin}
Under A2--A4 and A9, every broker-issued ticket $\tau$ satisfies L2, L3, L5,
and L6 at issuance.
\end{lemma}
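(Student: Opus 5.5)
The plan is to argue directly from the definition of a successful prepare in the route-bound protocol. A ticket exists only if prepare returned it, and the broker retains the authoritative prepared record. Under A9, no adversary can construct a tuple that $\mathsf{Issued}$ accepts without that record, so each invariant reduces to a corresponding prepare precondition. I would fix an arbitrary issued $\tau$ and its prepare invocation on $(r,a)$, then discharge the four invariants in the order L2, L3, L5, L6.

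\textbf{L2.} Prepare condition~1 requires $\mathsf{Closed}(P,\chi_P)$ and that $r$ resolve in $Inv_P$. By A2, acceptance of the witness means its digests match the active profile and inventory. Resolution therefore refers to the active inventory, not to an agent-supplied copy. Together with A9 on $H(Inv_P)$, this rules out issuance against an absent route.

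\textbf{L3.} Prepare condition~2 requires the transducer $\mathcal T_{P,r}$ to complete with some $\mathbf e$ and the declared sink. Here $\mathcal T_{P,r}$ is selected by $N_P(rid)$ and not by the request, as the design section requires. The $\bot$ and $\epsilon$ branches of Equation~\eqref{eq:transducer} produce no ticket, so exactly one complete effect is bound. A3 supplies that this effect is the security-relevant canonical effect, including materialized defaults and all ten fields.

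\textbf{L5.} Prepare condition~3 gives $\mathsf{Valid}(g,t_{prep})$, the direct check $\mathsf{Auth}_P(\theta_{prep},\mathbf e)=\Allow$, and the trace check $\mathsf{TraceAuth}_P(\theta_{prep},Derive_{\Omega_P}(H_E\cdot\mathbf e,\Lambda))=\Allow$. A4 is the key ingredient here. It guarantees that $\theta_{prep}$ is an admissible $\mathsf{Refresh}_P$ drawn from authenticated, linearizable authorities, that $t_{prep}$ is authoritative time, and that $H_E$ and $\Lambda$ are the complete admitted history and labels rather than agent-chosen ones. It also gives that $H(g)$ is recomputed from the exact grant body.

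\textbf{L6.} This is the step I expect to be the main obstacle, because immutability is a claim about the ticket after issuance and not only at issuance. I would first argue that every listed binding is a field of $\tau$ in Equation~\eqref{eq:ticket}. The validity interval is covered because $H(g)$ binds both endpoints of $I_g$, and $ik$ is derived from $taskId$ and $requestId$. Each field is also covered by $tid$. Under A9, any alteration of a field either changes $tid$ or breaks the domain-separated encoding. The broker's stored record is keyed by $tid$, and $\mathsf{Issued}$ checks canonical equality against it. A modified copy is therefore not an issued ticket, which yields immutability.

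A subtle point is that the oracle snapshot $W$ carries its own authenticator $H_W$. By A4 and A9, $W$ is the authoritative snapshot required by prepare condition~4 and cannot be substituted. Only A2--A4 and A9 are used, which matches the lemma's hypotheses.
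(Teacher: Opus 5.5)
Your proposal is correct and matches the paper's proof essentially step for step. L2 and L3 follow from route resolution in the active inventory and completion of the registered transducer (no ticket on $\bot$ or partial input, with A3 supplying the typed refinement). L5 follows from the grant-validity, $\mathsf{Auth}_P$, and $\mathsf{TraceAuth}_P$ checks in the A4-refreshed prepare environment. L6 follows from the fields of Equation~\eqref{eq:ticket}, the domain-separated $tid$, the broker-retained authoritative record, and A9.

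Your extra appeals to A2 for L2 and to the authenticator $H_W$ are harmless. One small misattribution: the fresh recomputation of $H(g)$ is part of the protocol's definition of $H(g)$, made binding by A9, rather than a consequence of A4.
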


\begin{proof}
Prepare resolves $rid$ in the active inventory before invoking a normalizer.
Failure to resolve returns $\bot$ and creates no ticket, proving L2. For a
resolved route, Equation~\eqref{eq:transducer} must complete with exactly one
effect; partial, malformed, ambiguous, and unsupported events return no ticket.
A3 supplies the registered typed refinement, proving L3.

Prepare refreshes the authoritative grant, policy epoch, and time while
preserving the admitted history and labels, as in
Equation~\eqref{eq:authority-refresh}. It evaluates grant validity,
$\mathsf{Auth}_P$ for the candidate direct effect, and $\mathsf{TraceAuth}_P$
for the effects derived from the full candidate extension. It persists a ticket
only if all accept, proving L5. This does not reauthorize earlier direct effects
under the candidate's grant. Equation~\eqref{eq:ticket} includes actor, task,
request, their derived idempotency key, route, sink, effect digest, a freshly
computed canonical grant digest that binds the validity interval,
coverage/profile/inventory digests, the state witness, and trace-prefix digest
within the domain-separated ticket identifier. The broker retains the
authoritative record. Under A9, modifying a field produces a record unequal to
the issued record or invalidates its authenticator. This is L6.
\end{proof}

\begin{lemma}[Matched decision congruence]
\label{lem:congruence}
Under A3, A4, A6, and A9, L4 holds.
\end{lemma}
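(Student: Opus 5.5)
The plan is to unfold Definition~\ref{def:matched} and then use the determinism and interface erasure of A6. Suppose $(r_1,a_1)\equiv^{P,S}_{\Omega}(r_2,a_2)$. The first step extracts a single canonical effect. By the definition of matching, both routes are registered and their completed transducers $\mathcal T_{P,r_1}$ and $\mathcal T_{P,r_2}$ return the same canonical effect $\mathbf e$. A3 ensures this $\mathbf e$ is the security-relevant canonical effect of each concrete action, not an artifact of parsing, with defaults already materialized. A9 ensures that equality of effects means equality of their canonical, domain-separated encodings. Without A9, two syntactically distinct tuples could collide under $H(\mathbf e)$ or be compared differently by the two evaluations.

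The second step fixes the authorization state $\theta$. L4 speaks of matched actions ``at the same authorization state''. I will use A4 to argue that the refreshed $\theta=\langle g,H_E,\Lambda,\epsilon_A,t\rangle$ is resolved from authenticated, linearizable authorities. It is therefore a single well-defined value, not a per-route view. In particular, the history and labels entering $Derive_{\Omega_P}(H_E\cdot\mathbf e,\Lambda)$ are the same for both evaluations, so the derived-effect set $D$ is identical, because $Derive_{\Omega_P}$ depends only on $(H_E\cdot\mathbf e,\Lambda)$.

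The third step applies A6. $\mathsf{Auth}_P(\theta,\cdot)$ and $\mathsf{TraceAuth}_P(\theta,\cdot)$ are deterministic and do not read the interface name or the surface syntax. Their inputs for the two routes are identical: $(\theta,\mathbf e)$ for the direct check and $(\theta,D)$ for the trace check. Hence both checks return equal values. The policy decision is the conjunction of these two checks, and the grant-validity predicate $\mathsf{Valid}(g,t)$ also reads only $\theta$. So the two decisions coincide, which is L4.

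The main obstacle is the second step. I must make precise that ``same authorization state'' cannot be subverted by route-specific inputs, for example a grant or label set carried in the request of one surface. Here I would rely on A4 to exclude agent-supplied operands. I would also note that the realized-transition clause of matching is not needed for the decision itself, since it matters only for the refinement results.
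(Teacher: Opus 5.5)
Your proposal is correct and follows the paper's proof essentially step for step. Definition~\ref{def:matched} supplies a single canonical effect $\mathbf e$; the shared refreshed state $\theta$ (by hypothesis and A4) fixes the history and labels, so $Derive_{\Omega_P}$ returns the same derived set; and A6's determinism and interface erasure then force equal $\mathsf{Auth}_P$ and $\mathsf{TraceAuth}_P$ results, with A3 and A9 serving only to rule out substitution of the normalized or state objects.
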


\begin{proof}
Let $(r_1,a_1)\equiv^{P,S}_{\Omega}(r_2,a_2)$. By
Definition~\ref{def:matched}, the registered normalizers return the same
canonical effect $\mathbf e$ at $S$. Both decisions start from the same
refreshed authoritative state $\theta$ by hypothesis and A4. A6 makes
$\mathsf{Auth}_P(\theta,\mathbf e)$ deterministic and independent of the
concrete interface. The same history, labels, and candidate also produce the
same $Derive_{\Omega_P}$ output and hence the same $\mathsf{TraceAuth}_P$
decision. The two policy results are therefore equal. A3 and A9 rule out
replacing one normalized or state object during the comparison. This equality
does not assert equality across different grant, epoch, or time environments.
\end{proof}

\subsection{Commit lemmas}

\begin{lemma}[Exact-route revalidation]
\label{lem:route}
Under A3, A7, and A9, an issued ticket for $rid$ cannot pass the commit guard
with $rid'\ne rid$. A resolved mismatch atomically fences the old ticket before
denial. Thus L7 holds, including when the two concrete actions are matched
under Definition~\ref{def:matched}.
\end{lemma}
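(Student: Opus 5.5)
The argument follows the structure of the commit guard in Equation~\eqref{eq:commit-guard}. Fix an issued ticket $\tau$ with $\tau[rid]=rid$, and a commit attempt $c=\langle\tau,r',a',g',actor'\rangle$ that resolves $r'$ in the active inventory to a route identifier $rid'\ne rid$.

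\textbf{Step 1: the guard fails.} By Lemma~\ref{lem:ticket-origin}, the bindings of $\tau$ satisfy L6. So $\tau[rid]$ is the value persisted in the broker's authoritative prepared record, and $\tau[rid]=rid$. The caller cannot present a self-constructed tuple with a different route field. Under A9, canonical domain-separated encoding makes $rid'\ne rid$ an inequality of canonical identifiers, not of surface strings. Hence the conjunct $rid'=rid_\tau$ is false and $\mathsf{Guard}_C$ fails. This holds regardless of every other conjunct: in particular it holds when $H(\mathbf e')=H(\mathbf e)_\tau$, and when the two actions are matched under Definition~\ref{def:matched}. Matching constrains only the normalized effect and the $\Delta_P$ projection, never the route identifier, so the matched case is a special case of the argument rather than a separate one.

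\textbf{Step 2: the sink is not invoked.} The failure is detected before finality. Equation~\eqref{eq:outcome} and the protocol rule that a failing conjunct yields \Deny{} without a sink call then show that no sink transition occurs. A3 matters here only to ensure that $r'$ resolves to its registered normalizer and declared sink, so that $rid'$ is well defined.

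\textbf{Step 3: fencing is atomic.} Because the mismatch is a resolved route mismatch, the protocol takes the distinguished failure transition: it removes the prepared record, records $\mathsf{Fenced}(tid)$, and denies. A7 makes this a single authoritative atomic transition, linearized at $lin(c)$.

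\textbf{Step 4: the fence excludes every later redemption.} Take any later attempt $c^*$ presenting $\tau$ with $lin(c^*)>lin(c)$, on any route including $rid$ itself. At $lin(c^*)$, the guard sees either $\mathsf{Fenced}(tid)$ or the absence of $\mathsf{Issued}(\tau)$, so the attempt is denied.

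\textbf{Main obstacle: concurrency.} The hard part is ruling out interleavings. A concurrent correct-route commit of $\tau$ might race with the mismatched attempt, or a reordered duplicate might slip past the fence. I would treat this by linearization. A7 places consumption, fencing, and sink admission at authoritative atomic points, so order the two attempts by their linearization points.
\begin{itemize}
\item If the correct-route commit linearizes first, $tid$ is consumed. The mismatched attempt then fails both the route conjunct and $\neg\mathsf{Consumed}(tid)$, and denies without a sink call.
\item If the mismatched attempt linearizes first, the fence precedes the other attempt, which is denied by Step 4.
\end{itemize}
In both orders, $\tau$ authorizes at most the original route, which establishes L7.
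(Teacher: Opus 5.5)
Your argument is correct and follows the paper's proof. The route conjunct $rid'=rid_\tau$ in the commit guard is independent of the effect-digest conjunct, so matched status cannot help; A9 blocks rebinding the ticket's route field; and A7 makes the fence-and-deny transition atomic. Your linearization case split just makes explicit what the paper compresses into ``later return to the original route cannot redeem the attacked ticket.'' One bookkeeping point: citing Lemma~\ref{lem:ticket-origin} for L6 formally imports A2 and A4, which this lemma does not assume. The paper gets the needed binding directly from A9 and the broker-retained record, and you should do the same.
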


\begin{proof}
The guard in Equation~\eqref{eq:commit-guard} contains the exact equality
$rid'=rid_\tau$ independently of the effect-digest equality. If $rid'\ne
rid_\tau$, this conjunct is false and A7 prevents the sink transition. Matched
status may establish $H(\mathbf e')=H(\mathbf e)_\tau$ but cannot change the
route conjunct. A9 prevents either identifier from being rebound inside the
ticket. The specified mismatch transition removes the prepared record and
records $\mathsf{Fenced}(tid)$ atomically under A7, without sink admission.
Thus route transfer fails and later return to the original route cannot redeem
the attacked ticket.
\end{proof}

\begin{lemma}[Current-state and trace revalidation]
\label{lem:fresh}
Under A4, A7, and A9, a ticket admitting a sink transition satisfies L8 and L9
at $lin(c)$, irrespective of external receipt delivery.
\end{lemma}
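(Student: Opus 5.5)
The argument unfolds the commit guard in Equation~\eqref{eq:commit-guard} at the linearization point $lin(c)$ of the commit attempt $c$ that admits the sink transition. By A7, the state comparison, the ticket and idempotency-key consumption, and the sink admission all occur in one authoritative atomic transition. So every conjunct the guard evaluates is read at $lin(c)$ itself and not at some earlier moment that a concurrent writer could invalidate. Admission therefore implies that $\mathsf{Guard}_C$ held at $lin(c)$, and we can project the relevant conjuncts from it.

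\emph{L8.} From the guard we have $Oracle_P(S_{now},\mathbf e')=W_\tau$, where $S_{now}$ is the authoritative state at $lin(c)$. By A4, the oracle read is authenticated and linearizable, so $S_{now}$ is the actual state at that point. By A9, $W_\tau$ is the broker-retained, authenticated snapshot from preparation (L6 via Lemma~\ref{lem:ticket-origin}); the agent cannot supply a different value, and $H_W$ fixes epoch, stateKey, version, targetBinding, and sinkId. For the time component, the guard contains $t_{now}=\mathsf{AuthoritativeTime}()$, $\mathsf{Valid}(g',t_{now})$, and $H(g')=H(g)_\tau$. A9 makes the recomputed grant digest bind $\mathsf{notBefore}$ and $\mathsf{expiresAt}$, so $g'$ has the same validity interval as the bound grant, and $t_{now}$ lies in it.

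\emph{L9.} The guard gives $H(H_{E,now})=H(H_E)_\tau$, and $\mathsf{Auth}_P(\theta_{now},\mathbf e')=\Allow$ and $\mathsf{TraceAuth}_P(\theta_{now},D_{now})=\Allow$. The point to argue is that $H_{E,now}$ is the \emph{complete} admitted prefix, including admissions whose receipts were never delivered. Here I use A7 and A8: $\delta_A$ appends to the authoritative history atomically with each sink transition and its internal receipt. Receipt delivery is a separate later event and does not gate the update. By A4, refresh preserves history and labels and resolves them from the authority, not from externally observed receipts. So every prior admission linearized before $lin(c)$ appears in $H_{E,now}$ and $\Lambda_{now}$, and $D_{now}$ is computed over that full extension. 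A9 then lifts digest equality to equality of the prefix.

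The main obstacle is the phrase ``irrespective of external receipt delivery''. The argument must show that no execution lets a lost-receipt admission be absent from the history the guard reads, for example by having the caller supply the history or by recomputing it from delivered receipts. I would handle this by pointing out that the guard reads $H_{E,now}$ from the authority (A4), not from the request. Combined with A8's clause that delivery loss preserves admission and authority-state records, this closes the gap. Finally, since any mismatch makes the guard false and A7 then forbids the sink transition, the contrapositive gives the lemma for every admitting ticket.
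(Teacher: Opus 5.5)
Your proposal is correct and follows essentially the paper's route: you read off the snapshot, history-digest, grant-digest, validity, $\mathsf{Auth}_P$, and $\mathsf{TraceAuth}_P$ conjuncts of $\mathsf{Guard}_C$ at $lin(c)$ via A7, use A4 for an authoritative snapshot, time, and non-erasable admitted history, and use A9 to lift digest equalities to equality of snapshot, grant interval, and prefix. The one blemish is that you lean on A8, and on Lemma~\ref{lem:ticket-origin} (which also carries A2--A3), although the lemma assumes only A4, A7, and A9; neither is needed, because completeness of $H_{E,now}$ under lost delivery already follows from A7's atomic history/label update plus A4's non-erasing refresh, and the immutability of $W_\tau$ follows from A9 and the broker-retained record.
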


\begin{proof}
The guard compares $Oracle_P(S_{now},\mathbf e')$ with $W_\tau$ and the
complete authoritative admitted-history digest with $H(H_E)_\tau$. It
refreshes the current policy epoch and authoritative time, requires the exact
prepared grant body through $H(g')=H(g)_\tau$, and checks the grant's validity
interval. It reevaluates direct authorization for $\mathbf e'$ and
$\mathsf{TraceAuth}_P$ for the cumulative derived effects in the current
environment. A4 makes these values
authoritative, and A7 places the successful comparisons at the same
linearization point as ticket and idempotency-key consumption and sink
admission. A9 prevents a different snapshot, grant body, or prefix from
satisfying the bound digest. The refresh cannot erase prior admissions or
labels, including those whose receipt was not delivered externally. Therefore
every admitting ticket has exactly the prepared snapshot and complete trace
prefix, a currently valid bound grant, and passing direct and cumulative
derived-effect checks in its current environment.
\end{proof}

\begin{lemma}[Single ticket and logical-operation admission]
\label{lem:single}
Under A7 and A9, one ticket identifier admits at most one authoritative sink
transition, and all tickets carrying one logical-operation idempotency key
collectively admit at most one transition. Thus L10 holds.
\end{lemma}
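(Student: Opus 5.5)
The plan is to treat both claims of Lemma~\ref{lem:single} as consequences of the commit guard in Equation~\eqref{eq:commit-guard}, read at the linearization points supplied by A7. First, fix a ticket identifier $tid$ and consider every commit attempt $c$ that presents a ticket carrying $tid$ and reaches sink admission. By the guard, $\neg\mathsf{Consumed}(tid)$ must hold at $lin(c)$. By the commit rule, a successful guard atomically consumes $tid$ together with the sink transition. Order the admitting attempts by their linearization points, which A7 makes a total order on authoritative transitions. Then every admitting attempt after the first observes $\mathsf{Consumed}(tid)$ and fails before finality. Hence at most one admission occurs per $tid$.

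A9 closes the obvious escape: an adversary cannot obtain a second issued record with the same $tid$ but altered fields. The identifier $tid$ is a domain-separated digest of all ticket fields. The broker also checks $\mathsf{Issued}(\tau)$ against its retained record, so a self-constructed tuple does not count. Fencing only adds the $\neg\mathsf{Fenced}(tid)$ conjunct, so it cannot create extra admissions.

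For the idempotency key, the argument repeats with $ik$ in place of $tid$. The guard requires $\neg\mathsf{ConsumedIK}(ik_\tau)$ together with $ik_\tau=H(\langle taskId_\tau,requestId_\tau\rangle)$, and admission consumes $ik$ in the same atomic step. Any two admissions carrying the same key are therefore totally ordered by A7, and the later one sees the key consumed. This covers tickets prepared on different routes, including re-preparations after a route-switch fence, because $ik$ depends only on task and request and not on $rid$.

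The main obstacle is to rule out a second ticket that carries the same logical operation under a \emph{different} stored key, or a key that was never consumed. I would handle this with the recomputation conjunct of the guard together with A9. Collision resistance of $H$ means equal $\langle taskId,requestId\rangle$ pairs always yield equal $ik$. The equality conjunct forbids a ticket whose key field disagrees with its own task and request fields. I would also note that pre-finality denials, including route-mismatch fencing, never consume $ik$; this is consistent with the claim, which only concerns admissions. Together these give L10.
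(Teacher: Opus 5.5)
Your proposal is correct and follows the paper's proof essentially step for step. A7 totally orders the putative admissions, the first one atomically sets $\mathsf{Consumed}(tid)$ (respectively $\mathsf{ConsumedIK}(ik)$), the later one then falsifies the corresponding guard conjunct, and A9 prevents the identifiers, or the task/request pair that derives $ik$, from being rebound.

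One small correction: equal $\langle taskId,requestId\rangle$ pairs yield equal $ik$ because $H$ is a deterministic function and the guard recomputes the key. That step does not rely on collision resistance.
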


\begin{proof}
Assume two commits $c_1$ and $c_2$ with the same $tid$ both admit a sink
transition. Their linearization points are totally ordered; without loss of
generality, $lin(c_1)<lin(c_2)$. At $lin(c_1)$, A7 atomically marks $tid$
consumed with sink admission. At $lin(c_2)$, the guard conjunct
$\neg\mathsf{Consumed}(tid)$ is false, contradicting admission. A9 prevents
construction of a distinct accepted identifier for the same issued ticket fields.

For distinct tickets $\tau_1$ and $\tau_2$ carrying the same $ik$, order their
putative admissions similarly. The first admission atomically marks
$\mathsf{ConsumedIK}(ik)$ with the sink transition. At the second admission,
the corresponding guard conjunct is false. A9 prevents either identity or the
task/request pair that derives $ik$ from being rebound. Hence neither same-ticket replay nor
cross-ticket replay of one logical operation can create a second transition,
even if the first admission's external receipt is lost. A8 supplies the separate
receipt-agreement and reconciliation obligation, not the atomic exclusion of a
second admission.
\end{proof}

\begin{lemma}[Receipt agreement]
\label{lem:receipt-agreement}
Under A7--A9, every authoritative admission has an agreeing internal receipt
and authority-state update, and a reported \Commit{} satisfies L11. If a
transition may have occurred without an externally verifiable receipt, the only
permitted terminal observation in Equation~\eqref{eq:outcome} is \Indeterminate{}.
\end{lemma}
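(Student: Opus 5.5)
The proof has three parts: existence of the internal receipt, the \Commit{} direction (L11), and exclusivity of \Indeterminate{}. It relies on the atomicity premise A7, the receipt premise A8, the binding premise A9, and Lemma~\ref{lem:single}.

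\emph{Internal receipt.} Fix an authoritative admission, that is, a commit attempt $c$ whose guard held at $lin(c)$. By the commit rule and A7, consumption of $tid$ and $ik$, the sink transition $\mathsf{Step}_{P,d}(S^-,\mathbf e)=(S^+,\rho^{int})$, internal recording of $\rho^{int}$, and the $\delta_A$ update all occur in one atomic step. A8 then requires $\rho^{int}$ to be durable and to bind this unique realized transition together with $tid$ and $ik$. Because these records are written in that same atomic step, their existence cannot depend on later delivery. This gives the first clause: every admission has an agreeing internal receipt and an authority-state update.

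\emph{\Commit{} implies L11.} Suppose \Commit{} is reported. By Equation~\eqref{eq:outcome}, both $\mathsf{Guard}_C$ and $Receipt_P(\rho^{obs},\tau,S^-,S^+)$ hold. The receipt validator checks that $\rho^{obs}$ binds $tid$, $ik$, task, request, route, sink, effect digest, before- and after-state digests, settlement, and sequence, and that these agree with the ticket and the authoritative pre/post-state. Under A9 an adversary cannot forge an authenticated receipt that passes these checks, so $\rho^{obs}$ must be the authentic $\rho^{int}$ of some admission carrying this $tid$. Lemma~\ref{lem:single} says there is at most one such admission. Hence the verified receipt agrees with the ticket, $ik$, effect, and the actual pre/post-state of the unique transition, which is L11.

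\emph{Only \Indeterminate{} is permitted.} Suppose a transition may have occurred but no externally verifiable receipt is available. \Commit{} is excluded because its receipt conjunct fails. \Deny{} is excluded because Equation~\eqref{eq:outcome} allows it only for a guard failure \emph{before} a sink transition. For any attempt that may have reached $lin(c)$, A7 gives no independent evidence that the guard failed pre-finality. A8 states that delivery loss supplies no evidence of non-execution. Only the third case remains, so the outcome is \Indeterminate{}.

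The main obstacle is making the exclusion of \Deny{} rigorous: the observer must not be able to distinguish ``guard failed'' from ``transition occurred, receipt lost.'' I would address this by case-splitting on whether $lin(c)$ was reached. A pre-finality denial is a distinct atomic transition with no sink effect (A7), so a \Deny{} report must be grounded in that recorded denial. Where no such record is observable, the protocol must fall back to \Indeterminate{}.
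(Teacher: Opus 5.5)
Your proposal is correct and follows essentially the paper's argument: A7 makes the internal receipt and the $\delta_A$ update atomic with admission, A8 and A9 bind and protect that receipt independently of delivery, the $Receipt_P$ conjunct of the \Commit{} branch yields L11, and the \Deny{} branch of Equation~\eqref{eq:outcome} is excluded by definition once a transition may have occurred. The differences are cosmetic. You identify $\rho^{obs}$ with the unique internal receipt through A9 unforgeability plus Lemma~\ref{lem:single}, whereas the paper appeals directly to A8's sound agreement validation. Your closing case split is unnecessary, and its claim that every pre-finality denial is a recorded atomic transition goes beyond A7, which specifies such recording only for route-mismatch fencing.
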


\begin{proof}
A7 records the sink's internal receipt and applies the authoritative history
and label update atomically with admission. A8 binds this receipt to the unique
ticket, logical-operation idempotency key, and realized transition; A9 protects
the bindings. These facts hold before and independently of external delivery.
The \Commit{} branch additionally requires
$Receipt_P(\rho^{obs},\tau,S^-,S^+)$. Sound agreement validation under A8
therefore connects the externally verified receipt to that same internal
transition, effect, identity, and before/after state. This is L11. The \Deny{}
branch is defined only before a sink transition. Once such a transition may have
occurred, failure to verify a receipt satisfies neither the \Commit{} nor
\Deny{} branch and therefore selects \Indeterminate{}.
\end{proof}

\begin{lemma}[Prefix preservation]
\label{lem:prefix}
Under A4--A9, appending the canonical effect of any authoritative admission to an
accepted authoritative prefix yields another prefix in
$\mathcal L_A(\theta_0)$, with all $\Omega_P$-declared provenance labels
available to the next decision, even if the external outcome is \Indeterminate{}.
\end{lemma}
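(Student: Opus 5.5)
The plan is to check the single new admission directly against the definition of $\mathcal L_A(\theta_0)$ in Equation~\eqref{eq:trace-language}. Fix an accepted authoritative prefix $\mathbf e_1\ldots\mathbf e_{j-1}\in\mathcal L_A(\theta_0)$. Its witnesses $\{\eta_i,\rho_i^{int},\theta_i^-,\theta_i^+\}_{i<j}$ are the actual admitted ones, and they end in the authority state $\theta_{j-1}^+$ with history $H_{j-1}$ and labels $\Lambda_{j-1}$. Let $c$ be the commit attempt whose authoritative admission contributes $\mathbf e_j$. I extend the witness family by one step. For $\eta_j$ I take the commit environment $\langle g',\epsilon_{A,now},t_{now}\rangle$ refreshed at $lin(c)$. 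For $\rho_j^{int}$ I take the internal receipt recorded at admission, and I set $\theta_j^-=\theta_{now}$ and $\theta_j^+=\delta_A(\theta_{now},\mathbf e_j,\rho_j^{int})$. The existing witnesses are left unchanged, so the conditions for indices $i<j$ carry over verbatim.

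There are four conjuncts to discharge at index $j$.

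\textbf{Refresh.} I must show $\mathsf{Refresh}_P(\theta_{j-1}^+,\eta_j)=\theta_{now}$. A4 makes the history, labels, grant, epoch and time authoritative, and says refresh cannot erase or replace admitted history or provenance. Lemma~\ref{lem:fresh} (L9) says the ticket's history digest equals the complete authoritative admitted prefix, including admissions with lost external receipts, and A9 upgrades that digest equality to object equality. Together these identify the history and labels of $\theta_{now}$ with $H_{j-1},\Lambda_{j-1}$. The refresh equation~\eqref{eq:authority-refresh} then holds by construction.

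\textbf{Auth and TraceAuth.} Both conjuncts are the last two guard conjuncts of Equation~\eqref{eq:commit-guard}, evaluated in $\theta_{now}$. I still have to show that $\mathbf e'$ there is $\mathbf e_j$ and that $D_{now}=D_j$. The first follows from A3 together with Lemma~\ref{lem:ticket-origin} and the guard's effect-digest equality under A9: the admitted canonical effect is the one that was checked. The second follows because $Derive_{\Omega_P}$ is applied to the same $H_{j-1}\cdot\mathbf e_j$ and $\Lambda_{j-1}$. A6 guarantees that these decisions are deterministic functions of $(\theta,\mathbf e)$, so they remain valid as existential witnesses.

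\textbf{Transition.} The condition $\theta_j^+=\delta_A(\ldots)$ holds because A7 applies $\delta_A$ atomically with the sink transition. A8 then ensures $\rho_j^{int}$ is the unique internal receipt of that realized transition rather than an agent-chosen one, as required by the admissibility clause after Equation~\eqref{eq:trace-language}.

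Label availability is the remaining claim. Under A5, the labels written by $\delta_A$ from $\rho_j^{int}$, including any read-handle labels, survive every registered interface transition. Under A4, the next refresh preserves them, so they appear in $\Lambda_j$ for the next decision. The \Indeterminate{} clause follows from Lemma~\ref{lem:receipt-agreement}: the internal receipt and the authority-state update exist independently of external delivery, so none of the steps above used $\rho^{obs}$.

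I expect the main obstacle to be the Refresh step, specifically showing that the history and labels in the commit environment equal exactly those in $\theta_{j-1}^+$. There are two risks. First, the prefix bound at prepare could differ from the prefix at commit. Second, a concurrent admission could intervene between $c$'s guard evaluation and its admission. I would handle both using A7's placement of the history-digest comparison at $lin(c)$. Linearization totally orders admissions, so $\mathbf e_{j-1}$ is the last admission before $lin(c)$, and the guard's equality then forces the compared prefix to be exactly $H_{j-1}$.
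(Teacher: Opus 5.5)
Your proposal is correct and follows the paper's own argument. Like the paper, you extend the existential witnesses by one step, taking the actual commit-time refresh, the internally recorded receipt from Lemma~\ref{lem:receipt-agreement}, and the $\delta_A$ update applied atomically under A7, and you use Lemma~\ref{lem:fresh} for the two guard checks and A5 for label survival, only at a finer grain. One tightening: your appeal to A3 and Lemma~\ref{lem:ticket-origin} (which needs A2--A3) goes outside the lemma's hypotheses A4--A9, and it is unnecessary, because the guard's effect-digest conjunct under A9 already identifies the checked $\mathbf e'$ with the admitted effect.
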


\begin{proof}
Let the accepted prefix end in $\theta_{j-1}^+$. The actual authority refresh
produces $\theta_j^-$ with its own grant $g_j$, current policy epoch, and time,
without changing that prefix or its labels. By Lemma~\ref{lem:fresh}, the
commit guard accepts the next direct effect under $\mathsf{Auth}_P$ and its
cumulative derived effects under $\mathsf{TraceAuth}_P$. Earlier direct effects
keep their earlier authorization witnesses. Lemma~\ref{lem:receipt-agreement}
supplies the internally recorded sink receipt $\rho_j^{int}$, independently of
its delivery. A7 applies $\delta_A(\theta_j^-,\mathbf e_j,\rho_j^{int})$
at admission. These actual environment, receipt, and state objects satisfy the
next step of Equation~\eqref{eq:trace-language}. A5 preserves the resulting
labels and ancestry across the registered output/input boundary; A8 and A9 bind
the update to the unique realized transition. External response loss changes
the observer's outcome, not the resulting accepted authoritative prefix.
\end{proof}

\subsection{Main theorems}

\begin{proof}[Proof of Theorem~\ref{thm:refinement}]
Suppose a protected sink admits concrete action $(r,a)$ through the protocol. By
Lemma~\ref{lem:ticket-origin}, its ticket originates from one registered route
and one completed typed normalization to $\mathbf e$. Commit re-resolves the
same route, re-normalizes the action, and compares the effect and sink digests;
Lemma~\ref{lem:route} excludes a different route. By
Lemma~\ref{lem:fresh}, the authoritative state at $lin(c)$ equals the bound
state. A3 relates the accepted concrete route action and sink transition to the
declared $\Delta_P$ projection at that state. Finally,
Lemma~\ref{lem:receipt-agreement} supplies an agreeing internal authoritative
receipt. Thus every admission has exactly the route, effect, and
realized-transition refinement claimed, even when the receipt is not available
externally. A reported \Commit{} additionally has the verified observed receipt
required by L11.
\end{proof}

\begin{proof}[Proof of Theorem~\ref{thm:invariance}]
Decision equality is Lemma~\ref{lem:congruence}. If $r_1\ne r_2$,
Lemma~\ref{lem:route} shows that $\tau_1$ cannot pass commit on $r_2$, regardless
of effect equality, and A7 makes the mismatch fence $\tau_1$ atomically.
Preparing $(r_2,a_2)$ repeats profile, normalization, authorization, trace, and
authoritative-state checks and yields a ticket binding $rid_2$. Because the
mismatch admitted no transition and consumed no $ik$, that ticket can commit
when its own current guard holds. Hence the policy is interface invariant for
the matched pair while execution authority remains route specific.
\end{proof}

\begin{proof}[Proof of Theorem~\ref{thm:composition}]
Consider an unauthorized direct effect attempted through an alternate
interface at the same refreshed authority environment. If the route is
unregistered, L2 prevents a ticket. If it is
registered and normalizes to the same $\mathbf e$, A6 produces the same denial
as any matched surface. If it normalizes to a different effect $\mathbf e'$,
the broker evaluates $\mathbf e'$ against the grant; it cannot inherit the
decision for $\mathbf e$. A faulty route that bypasses the broker contradicts
L1 and A1. Route switching after prepare is excluded by L7. Thus encoding alone
cannot make the unauthorized direct effect committable.

For recomposition, use induction on all authoritative admissions. The empty
sequence is in $\mathcal L_A(\theta_0)$. Assume prefix $H_j$ is in the language. Before
the next admission, the authority refresh selects the actual $g_{j+1}$,
current policy epoch, and time while preserving $H_j$ and its labels. L9 checks
the candidate direct effect against this grant and rescans the full candidate
extension for derived effects. A5 makes cross-interface provenance used in
that derivation available. If the direct check or $\mathsf{TraceAuth}_P$ denies,
the next sink transition is not admitted; otherwise Lemma~\ref{lem:prefix}
establishes the extended prefix. Receipt delivery does not select which
admissions enter the induction. Therefore every prefix of $\alpha_P(C)$ obeys
its per-step direct-grant and cumulative derived-effect checks.
\end{proof}

\begin{proof}[Proof of Theorem~\ref{thm:state}]
By Lemma~\ref{lem:fresh}, the oracle snapshot at $lin(c)$ must equal $W_\tau$.
Version, target binding, epoch, state key, and sink are fields of that canonical
snapshot. A4 makes their observation authoritative and A9 prevents a distinct
snapshot from comparing equal through forgery. A mismatch in any bound field at
commit therefore falsifies the guard. The trace-prefix check is analogous. The guard
also compares authoritative $t_{now}$ with the validity interval bound by the
freshly recomputed $H(g')$; if prepare precedes $\mathsf{expiresAt}(g)$ but
commit does not, $\mathsf{Valid}(g',t_{now})$ is false. A7 places these checks
before and atomically with sink admission, so the candidate transition is not
invoked when any check fails.
\end{proof}

\begin{proof}[Proof of Theorem~\ref{thm:receipt}]
Lemma~\ref{lem:single} gives at most one admitted transition per ticket and per
logical-operation idempotency key across distinct tickets.
Lemma~\ref{lem:receipt-agreement} gives a verified, matching receipt for every
reported commit. If the response is lost after admission, A8 preserves the
internal receipt and ticket/idempotency relation for reconciliation, while A7
preserves the already updated history, labels, and consumed markers. The caller
currently lacks evidence selecting execution or non-execution. Equation~\eqref{eq:outcome}
therefore reports \Indeterminate{} and cannot justify a fresh effect or a claim
that the original did not execute.
\end{proof}

\begin{proof}[Proof of Theorem~\ref{thm:confinement}]
Order all protected authoritative admissions by their A7 linearization points,
including those externally observed as \Indeterminate{}, and induct on this
order. The empty abstraction is in $\mathcal L_A(\theta_0)$. Assume the
abstraction through admission $j$ is an accepted prefix. For admission $j+1$,
Theorem~\ref{thm:refinement} supplies one registered concrete-to-canonical
refinement at current state. The actual authority refresh preserves the full
prefix and its labels. L5 and L9 ensure the candidate's direct-grant check and
cumulative derived-effect check pass in its refreshed commit environment. By
Lemma~\ref{lem:prefix}, appending it and applying the internal receipt-driven
authority update preserves language membership. L10 and
Lemma~\ref{lem:receipt-agreement} supply a unique admission and agreeing internal
receipt. Induction yields
Equation~\eqref{eq:confinement} for the whole execution. A1 and L1 exclude a
registered protected path outside the cut, and A5 carries the provenance needed
for derived effects across registered interfaces. Finally, L11 supplies a
verified agreeing observed receipt for every externally reported \Commit{}.
No language-membership claim is inferred for the possibly incomplete subsequence
of reported commits alone.
\end{proof}

\subsection{Necessity: separating executions}

The assumptions name distinct obligations. The following finite executions
show why no one can be silently dropped.

\begin{description}[leftmargin=2.2em,style=nextline]
  \item[Without A1.] An SDK route reaches the durable sink but is absent from
  the verified graph. Equation~\eqref{eq:path-cut} remains true for every listed
  route while the hidden route bypasses enforcement.

  \item[Without A2.] A verifier accepts a witness after an entry-to-sink edge is
  added around the cut. The witness digest exists, but L1 is false.

  \item[Without A3.] A CLI normalizer drops the recipient field. An internal
  message and an external message map to the same effect, so a policy decision
  for one can refine the other.

  \item[Without A4.] The target name is rebound after prepare, while a
  non-authoritative cache continues to report the old binding. The equality in
  the guard holds against the cache and the sink changes a different object.
  Likewise, a caller-supplied clock can keep an expired grant apparently
  current.

  \item[Without A5.] A GUI read returns a confidential handle, but the handle's
  label is lost when passed to an MCP send. The derived disclosure is absent
  from $Derive_{\Omega_P}$ and the composition can be admitted.

  \item[Without A6.] The policy allows effect $\mathbf e$ when its origin label
  is GUI and denies the identical effect when labeled API. Matched decision
  invariance fails even though both normalizers are sound.

  \item[Without A7.] Two commit workers read an unconsumed ticket concurrently,
  both pass, and each invokes the sink before either consumed marker becomes
  visible. One ticket then yields two effects. A non-atomic route mismatch can
  also deny on the target route while leaving the old ticket redeemable on its
  original route.

  \item[Without A8.] A validator accepts a genuine authenticated receipt for
  another transition without checking agreement with the queried ticket,
  effect, idempotency identity, and pre/post-state. No authenticator need be
  forged, but reported outcome grounding fails. A7 still prevents a second
  admission under the same ticket or task/request-derived idempotency key;
  loss of receipt agreement is not a counterexample to that atomic guarantee.

  \item[Without A9.] An adversary rebinds a route, effect, or state object while
  preserving the accepted identifier. Exact-field comparisons no longer imply
  semantic identity.
\end{description}

These executions also explain the evaluation categories. Coverage mutants
exercise A1/A2-sensitive structure and identity checks; matched and bypass
cases exercise A3/A6; recomposition exercises A5; race probes exercise A4/A7;
and replay, receipt, and checker mutations exercise A8/A9 and the corresponding
local invariants.

\section{Reference-Corpus Derivation}
\label{app:corpus}

The 1,732-scenario denominator follows directly from the registered generator.
Let $F=12$ effect families, $K=8$ instances, and $I=5$ interfaces. Matched
allows and denies each contribute $FKI=480$. For each instance, alternate
bypass attempts use GUI as the declared route and each of the other four
interfaces as the attempted route, contributing $FK(I-1)=384$. Route switch,
state change, and unknown surface each contribute $FK=96$. Recomposition uses
five variants and every ordered pair of distinct interfaces, contributing
$5I(I-1)=100$. Therefore
\[
 2FKI+FK(I-1)+3FK+5I(I-1)
 =960+384+288+100=1{,}732.
\]
Running each scenario through the direct-call and queued-plan mappings yields
$2\times1{,}732=3{,}464$ traces. The checker treats this arithmetic and pair
cardinality as semantic invariants; dropping or duplicating a trace cannot be
hidden by recomputing an outer digest.

\section{Claim-to-Evidence Map}
\label{app:claims}

Table~\ref{tab:claim-map} maps each central claim to its formal dependency and
retained empirical evidence. The formal and empirical columns play different
roles: a theorem is conditional on assumptions, while an executable result
shows behavior of the finite realization.

\begin{table*}[t]
\caption{Claim-to-evidence map.}
\label{tab:claim-map}
\small
\begin{tabularx}{\textwidth}{L{0.21\textwidth} L{0.25\textwidth} Y}
\toprule
Claim & Formal basis & Retained evidence \\
\midrule
Profile-bounded route refinement & L1--L3; Theorem~\ref{thm:refinement} & 60
registered routes; 24/24 coverage and 720/720 normalizer mutants rejected; 52
public-source checks \\
Matched policy decision, path-specific execution & L4, L6, L7;
Theorem~\ref{thm:invariance} & 192 matched groups; 192 route-switch traces
commit only after re-prepare \\
No alternate-route widening in the corpus & L1--L7;
Theorem~\ref{thm:composition} & 768 alternate-route traces deny across two
mappings \\
Declared recomposition control & L5, L9; Theorem~\ref{thm:composition} & 200
cross-surface recompositions deny after authorized read prefixes \\
Commit-time current-state binding & L8, L9; Theorem~\ref{thm:state} & 192
registered stale-state traces deny; 36 state/binding/epoch and 12 expiry probes
pass; 36 retained-digest grant mutations deny \\
Unique, receipt-grounded outcome & L10, L11; Theorem~\ref{thm:receipt} & 168/168
commit probes; ticket and cross-interface idempotency replays create no
duplicate; missing receipts are indeterminate \\
Trace-record integrity & L2--L11 & Independent checker accepts 3,464 originals
and rejects 26/26 semantic regressions \\
Source-shaped refinement feasibility & A1/A3 deployment evidence, not a
theorem about upstream products & 48/48 instrumented schema-adapter calls and
15/15 fail-closed negative cases over two frozen runtime shapes \\
\bottomrule
\end{tabularx}
\end{table*}

\bibliographystyle{ACM-Reference-Format}
\setlength{\bibsep}{-0.2pt}
\bibliography{references}

\end{document}